\documentclass[runningheads]{llncs}

\usepackage{eccv}

\usepackage{eccvabbrv}

\usepackage{graphicx}
\usepackage{booktabs}
\usepackage{multirow}
\usepackage{tabularx}
\usepackage{wrapfig}
\usepackage{subcaption}
\usepackage{colortbl}  
\definecolor{lightgray}{RGB}{230,250,255}
\definecolor{palegray}{RGB}{245,253,255}

\usepackage[accsupp]{axessibility}  % Improves PDF readability for those with disabilities.

\usepackage{hyperref}

\usepackage{orcidlink}

\DeclareMathOperator*{\argmax}{arg\,max}
\DeclareMathOperator*{\argmin}{arg\,min}
\newcommand*\diff{\mathop{}\!\mathrm{d}}
\providecommand{\citep}[1]{\cite{#1}}
\providecommand{\citet}[1]{\cite{#1}}
\definecolor{DodgerBlue3}{RGB}{24,116,205}
\definecolor{darkgreen}{RGB}{0,128,0}

\newcommand{\edit}[1]{\textcolor{black}{#1}}

\begin{document}

\title{Diffusion Integrated Gradients: Controllable Path Generation for Flexible Feature Attribution}
% \title{Integrated Gradients with Diffusion\\for Flexible Input Attribution}
\titlerunning{Diffusion Integrated Gradients}

% % TODO FINAL: Replace with your author list.
% % Include the authors' ORCID for the camera-ready version, if at all possible.

\author{
Soyeon Kim\inst{1,3}\orcidlink{0009-0001-5037-0902} \and
Kyowoon Lee\inst{2}\orcidlink{0009-0001-3600-8237}\textsuperscript{\dag} \and
Jaesik Choi\inst{1,3}\orcidlink{0000-0002-4663-3263}\textsuperscript{\dag}
}

% % TODO FINAL: Replace with an abbreviated list of authors.
\authorrunning{S.~Kim et al.}
% First names are abbreviated in the running head.
% If there are more than two authors, 'et al.' is used.

\institute{%
\makebox[\textwidth][c]{%
\begin{tabular}{@{}c@{\hspace{2em}}c@{\hspace{2em}}c@{}}
\textsuperscript{1}KAIST, Korea &
\textsuperscript{2}Ajou University, Korea &
\textsuperscript{3}INEEJI Corp., Korea
\end{tabular}%
}\\[0.2em]
\texttt{kyowoon.lee1924@gmail.com, \{soyeon.k,jaesik.choi\}@kaist.ac.kr}\\[0.5em]
\textsuperscript{\dag}Co-corresponding authors.
}

\let\oldaddcontentsline\addcontentsline
\renewcommand{\addcontentsline}[3]{}

\maketitle

\let\addcontentsline\oldaddcontentsline

\begin{abstract}
Path-based attribution methods such as Integrated Gradients (IG) are widely adopted for their strong axiomatic properties and effectiveness in attributing model predictions to input features by integrating gradients along a path from a baseline to the input. However, the choice of the attribution path largely affects the quality of explanations, and existing approaches rely on fixed or hand-crafted paths that often produce noisy or distorted attributions. To address this limitation, we propose \textit{Diffusion Integrated Gradients} (DiffIG), a novel method that reformulates path generation as a conditional generative modeling problem. DiffIG first trains a diffusion model to learn a distribution over paths generated from a \edit{Stick-Breaking Process}, then employs guided sampling to embed user guidance during the sampling procedure. We demonstrate that DiffIG quantitatively matches or outperforms existing path-based methods, achieving perceptually aligned explanations. This work introduces a new generative perspective for flexible, inference-time controllable Explainable Artificial Intelligence (XAI) methods. 
\keywords{Feature Attribution \and Path-Based Attribution Method \and Explainable Artificial Intelligence}
\end{abstract}

\begin{figure*}[t!]
    \centering
    \includegraphics[width=\linewidth]{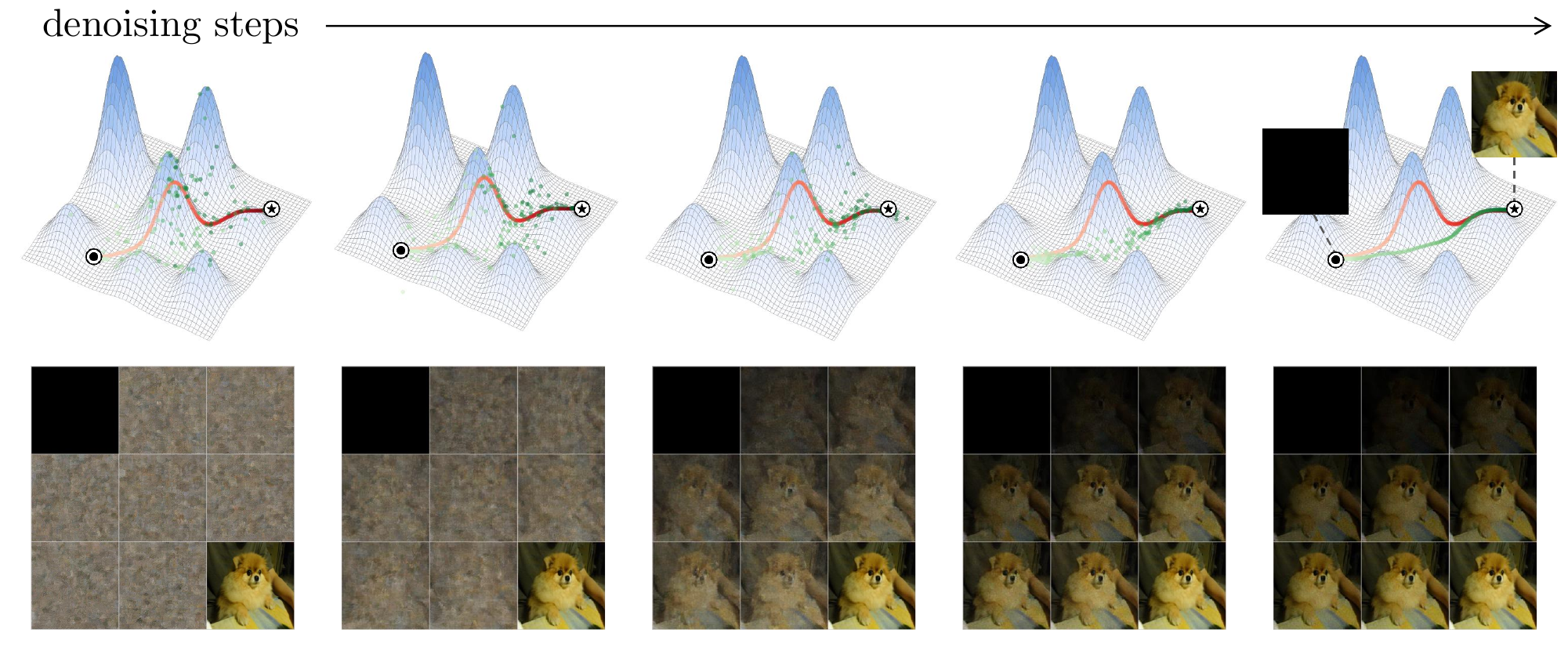}
    \caption{\textbf{Illustration of the path generation process of DiffIG.} 
    \textbf{(Top)} The diffusion model generates an attribution path from random noise through an iterative denoising process. 
    The {\color{DodgerBlue3}blue meshed field} refers to the logit surface of the model. 
    The {\color{red}red path} represents the IG straight-line path, which is susceptible to noisy gradient accumulation and often traverses irregular regions of the model surface. 
    In contrast, the {\color{darkgreen}green path} denotes the DiffIG path, which leverages guided sampling to avoid regions that induce noisy gradients and produces more faithful attributions through controllable guidance (see Sec.~\ref{sec:approach}). 
    \textbf{(Bottom)} Visualization in the image space, where DiffIG transforms a noisy initial trajectory into a non-linear path connecting the baseline point \protect{\raisebox{-.05cm}{\includegraphics[height=.35cm]{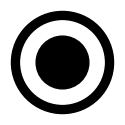}}} to the input point \protect{\raisebox{-.05cm}{\includegraphics[height=.35cm]{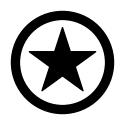}}}.
}
    \vspace{-0.2in}
    \label{fig:denoising_process}
\end{figure*}

\section{Introduction}
\label{sec:intro}

% Deep Neural Networks (DNNs) have achieved remarkable success across domains, but their black-box nature hinders deployment in high-stakes applications such as 
Deep Neural Networks (DNNs) have demonstrated exceptional performance in a wide range of domains; however, their opaque, black-box nature limits their use in high-stakes settings such as 
medical diagnosis, autonomous driving, \edit{or weather forecasting~\cite{kim2023explainable}}, where trust, debugging, safety, and fairness are essential~\cite{chen2021deep,lee2023adaptive}. \edit{Explainable Artificial Intelligence} (XAI), particularly input attribution, addresses this by assigning each input feature an importance score reflecting its contribution to the model output.
Consequently, a wide array of methods, such as Saliency Maps~\cite{simonyan2014saliency}, SHAP~\cite{lundberg2017unified}, and LRP~\citep{montavon2019layer} have been proposed. 
% However, the reliability of early attribution methods remains questionable, as they use heuristics and have failed to pass fundamental sanity checks~\cite{adebayo2018sanity}, revealing their potential to produce misleading or unfaithful explanations~\citep{kindermans2019reliability}.
However, the trustworthiness of early attribution methods remains uncertain, since they often rely on heuristics and have been shown to fail basic sanity checks~\cite{adebayo2018sanity}. This raises concerns that such methods may generate explanations that are misleading or not faithful to the underlying model behavior~\citep{kindermans2019reliability}.

To provide more principled explanations, Integrated Gradients (IG)~\citep{sundararajan2017axiomatic} emerged as a prominent solution. 
% IG is grounded in the Aumann–Shapley value framework and path-based formulations~\citep{aumann1974values, friedman2004paths}, 
IG is theoretically rooted in the Aumann–Shapley value framework and path-based attribution formulations~\citep{aumann1974values, friedman2004paths}, 
which provide strong theoretical guarantees. In particular, IG uniquely satisfies critical axioms such as \textit{Efficiency} or \textit{Completeness}, ensuring 
% that feature attributions sum to the difference between the model output at the input and at the baseline. IG computes attributions by integrating gradients along a straight path from a neutral baseline (e.g., a black image) to the target input. Due to this axiomatic rigor, IG has been widely adopted as a standard for feature attribution in complex models.
that the resulting feature attributions add up to the difference between the model’s prediction for the input and its prediction for the baseline. IG assigns importance scores by integrating gradients along a straight-line path from a neutral baseline, such as a black image, to the target input. Owing to this strong axiomatic foundation, IG has become widely used as a standard feature attribution method for complex models.

Despite these strengths, IG critically depends on the choice of integration path. The standard straight-line path is theoretically justified under the Aumann–Shapley framework~\citep{aumann1974values}, where it is uniquely consistent with the axioms and probabilistically optimal for functions whose gradient fields are \textit{conservative}, i.e., the path integral is path-independent and any baseline-to-input path yields the same attribution. This assumption breaks down for modern rectified DNNs: their piecewise-linear nature and discontinuous gradients at linear-region boundaries make the gradient field \textit{highly discontinuous}, so the attribution integral becomes \textit{path-dependent} and the straight-line path loses its optimality. Worse, this path can traverse complex regions of the function space, aggregating spurious feature contributions and misleading attributions~\citep{srinivas2019full, jeon2022distilled, zaher2024manifold, yang2023local}.

We propose \textbf{Diff}usion \textbf{I}ntegrated \textbf{G}radients (\textbf{DiffIG}), a path-based attribution method that learns integration paths via generative modeling. We train a diffusion model on the distribution of non-linear paths from a Stick-Breaking Process, capturing diverse trajectories of varying form and complexity. DiffIG further leverages guided sampling to embed user-defined priors at inference, steering path generation toward properties such as faithfulness and complexity. This transforms path-based attribution from a fixed procedure into a flexible, controllable generative process.

Our main contributions are as follows: 
\textbf{(1)} We reformulate attribution path generation as a conditional generative modeling problem.
\textbf{(2)} We propose \textbf{Diff}usion \textbf{I}ntegrated \textbf{G}radients (DiffIG), a novel method that enables flexible and inference-time controllable feature attribution.
\textbf{(3)} We demonstrate through quantitative and qualitative experiments that DiffIG \edit{matches or outperforms} existing path-based methods, yielding more faithful and perceptually aligned explanations.

\section{Background}
\subsection{Path-based Attribution Methods}
\label{subsec:path}

For an input $x \in \mathbb{R}^n$ and a differentiable model $f: \mathbb{R}^n \to \mathbb{R}$, path-based attribution methods accumulate gradients along a path $\gamma(t)$ for $t \in [0, 1]$. This path connects a baseline $x'$ to the input $x$ (i.e., $\gamma(0) = x'$ and $\gamma(1) = x$). In vision models, the baseline $x'$ is typically chosen as either a black or a white image. The attribution $\mathcal{A}_i$ for the $i$-th feature is defined as~\citep{sundararajan2017axiomatic}:
\begin{align}
    \mathcal{A}_i(\gamma) = \int_{0}^{1} \frac{\partial f(\gamma(t))}{\partial \gamma_i(t)}  \frac{\partial \gamma_i(t)}{\partial t} \diff t ,
    \label{eq:path_general}
\end{align}
where the integral consists of two key terms: $\frac{\partial f(\gamma(t))}{\partial \gamma_i(t)}$, the model gradient at $\gamma(t)$, and $\frac{\partial \gamma_i(t)}{\partial t}$, the path direction.

\paragraph{\textbf{Integrated Gradients (IG).}}

When the path is the straight line $\gamma(t) = x' + t (x - x')$, Equation~\eqref{eq:path_general} recovers the canonical \textit{Integrated Gradients} (IG)~\citep{sundararajan2017axiomatic}. For this path, the direction term $\frac{\partial \gamma_i(t)}{\partial t}$ simplifies to $(x_i - x'_i)$, yielding:
\begin{align}
    \mathcal{A}_i(\gamma) = (x_i - x'_i) \int_{0}^{1} \frac{\partial f(x' + t (x - x'))}{\edit{\partial x_i}} \, \diff t.
    \label{eq:ig}
\end{align}

\paragraph{\textbf{Path-Dependency Problem in Rectified DNNs.}}
\label{sec:background_path_dependency}

The theoretical elegance of IG relies on the assumption that the gradient field of $f$ is \textit{conservative}. In this case, the path integral is \edit{path-independent}, and any $\gamma$ connecting $x'$ and $x$ yields a unique attribution. However, this assumption does not hold for modern rectified DNNs.
Due to their piecewise-linear structure~\cite{chu2018exact}, the input space is partitioned into numerous distinct decision regions. Consequently, the gradient field of such models is highly discontinuous~\citep{balduzzi2017shattered}, making the resulting feature-wise attribution integral highly sensitive to the chosen path and susceptible to numerical errors~\cite{kapishnikov2021guided}.

In practice, the standard straight-line path remains a convenient yet naive heuristic that ignores the complex geometry of the underlying decision surface of the model. As a result, it often traverses discontinuities at region boundaries, accumulating noisy or spurious gradients. 
This can lead to misleading feature attribution allocation, resulting in attributions that are unfaithful to the true decision-making process of the model.
Consequently, the key challenge for improving path-based attribution lies in discovering an adaptive and model-aware integration path.

\subsection{Diffusion Probabilistic Models}

Diffusion probabilistic models~\citep{sohl2015deep,ho2020denoising} are generative models that learn to reverse a fixed noising process.
The learned reverse process $p_\theta(\gamma^{\tau-1} \mid \gamma^{\tau})$ inverts a predefined forward process $q(\gamma^{\tau} \mid \gamma^{\tau-1})$ that gradually adds noise to data until it becomes a standard Gaussian prior. The model learns a data distribution represented as a Markov chain with Gaussian transitions:
\begin{align}
p_\theta(\gamma^{0}) 
&= \int p(\gamma^{M}) 
\prod_{\tau=1}^{M} p_\theta(\gamma^{\tau-1} \mid \gamma^{\tau}) 
\, \edit{\diff} \gamma^{1:M}.
\end{align}
Here, $\gamma^{0}$ represents the clean data, and $p(\gamma^{M})$ is the standard Gaussian prior after $M$ noising steps. 
The training objective is to find parameters $\theta$ that minimize a variational bound on the negative log-likelihood of the reverse process: $\theta^* = \argmin_{\theta} -\mathbb{E}_{\gamma^{0}}[\log p_\theta(\gamma^{0})]$. In practice, the reverse transition $p_\theta(\gamma^{\tau-1} \mid \gamma^{\tau})$ is often modeled as a Gaussian distribution with learnable means:
\begin{align}
\label{eq:reverse_process}
p_\theta(\gamma^{\tau-1} \mid \gamma^{\tau}) 
&= \mathcal{N}(\gamma^{\tau-1} \mid \mu_\theta(\gamma^{\tau}), \Sigma^{\tau}),
\end{align}
while the forward process $q(\gamma^{\tau} \mid \gamma^{\tau-1})$ is fixed and is typically specified by a predefined variance schedule.

\section{Diffusion Integrated Gradients}
\label{sec:approach}

We first formulate attribution path generation as conditional generative modeling, then introduce \textbf{Diffusion Integrated Gradients (DiffIG)}, a path-based attribution method that learns integration paths through generative modeling and guided sampling. As highlighted in Figure~\ref{fig:method_comparison_table}, DiffIG offers distinct advantages over existing path-based methods in adaptive path generation, global optimization, and controllability, and Figure~\ref{fig:denoising_process} overviews its path generation process.

\subsection{Path Generation with Diffusion}

As discussed in Sec.~\ref{subsec:path}, the quality of the resulting attributions heavily depends on the chosen path. In particular, methods such as Integrated Gradients (IG) employ a model-agnostic linear path, making their attributions highly sensitive to the underlying decision surface of the model. To mitigate the accumulation of noise and unfaithful attributions caused by such paths, path-based attribution methods aim to identify a path $\gamma$ that connects a baseline $x'$ to an input $x$. Formally, this can be formulated as finding an optimal path $\gamma^{*}$ that maximizes an objective function $\mathcal{J}$:
\begin{align}
\label{eq:path_objective}
    \gamma^{*} = \argmax_{\gamma \in \Gamma} \, \mathcal{J}(\gamma),
\end{align}
where $\Gamma$ denotes the set of all possible paths, $\gamma$ represents a path, and $\mathcal{J}$ indicates the corresponding objective value of that path.

For example, Guided IG (GIG) \cite{kapishnikov2021guided} seeks a path that maximizes the objective $\mathcal{J}(\gamma)$, defined as the negative of a cost combining path instability $\ell_{\text{noise}}$ and deviation from the straight-line path $\ell_{\text{distance}}$:
\begin{align}
    \mathcal{J}(\gamma) &= -(\ell_{\text{noise}} + \lambda \ell_{\text{distance}}) \nonumber \\
    \ell_{\text{noise}} &= \sum_{i=1}^{n} \int_{t=0}^1\left|\frac{\partial f(\gamma(t))}{\partial \gamma_i(t)}\frac{\partial \gamma_i(t)}{\partial t}\right| \diff t \nonumber \\
    \ell_{\text{distance}} &= \int_{t=0}^1\left|\left|\gamma(t)-\gamma_{\text{IG}}(t)\right|\right|\diff t \nonumber,
\end{align}
where $\gamma_{\text{IG}}(t)$ is the straight-line path, $\ell_{\text{noise}}$ quantifies the accumulated gradient magnitude on irrelevant features, and $\ell_{\text{distance}}$ measures the deviation from the linear path. However, optimizing such a complex objective over a continuous function space, particularly under the complex topology of the underlying decision surface of the model, is generally intractable and necessitates heuristic or greedy approximations. Consequently, GIG relies on a \textit{greedy}, iterative approximation that performs heuristic local updates, preventing it from identifying a globally consistent integration path.

Instead of explicitly minimizing an objective as in Eq.~\eqref{eq:path_objective}, we draw on offline reinforcement learning, where sequential decision-making is cast as generating desired trajectories conditioned on their outcomes~\citep{janner2022planning, ajay2023is}. Following this view, we reformulate finding an optimal path as conditional generative modeling:
\begin{align}
    \theta^{*} = \argmax_{\theta} \, \mathbb{E}_{\gamma \sim \mathcal{D}} \big[ \log p_{\theta}(\gamma|y=\mathcal{J}(\gamma) ) \big],
\end{align}
where $\mathcal{D}$ represents the dataset of pre-collected paths, and the conditional label $y$ denotes the score $\mathcal{J}(\gamma)$ associated with a path $\gamma$ from this dataset. By learning this conditional data distribution $p_{\theta}$, we can later generate new paths $\gamma$ that possess high $\mathcal{J}(\gamma)$ values.

Unlike unconstrained image synthesis, our objective is path generation between fixed endpoints.
To ensure that the generated path $\gamma(t)$ indeed connects the baseline and the input, we explicitly enforce the boundary conditions $\gamma(0) = x'$ and $\gamma(1) = x$ at every denoising step during sampling. 
This guarantees that the diffusion model generates only the valid intermediate attribution trajectories that start from the baseline and terminate at the input.

\setlength{\intextsep}{0pt}
\begin{wrapfigure}{l}{0.55\linewidth}
    \centering
    \includegraphics[width=\linewidth]{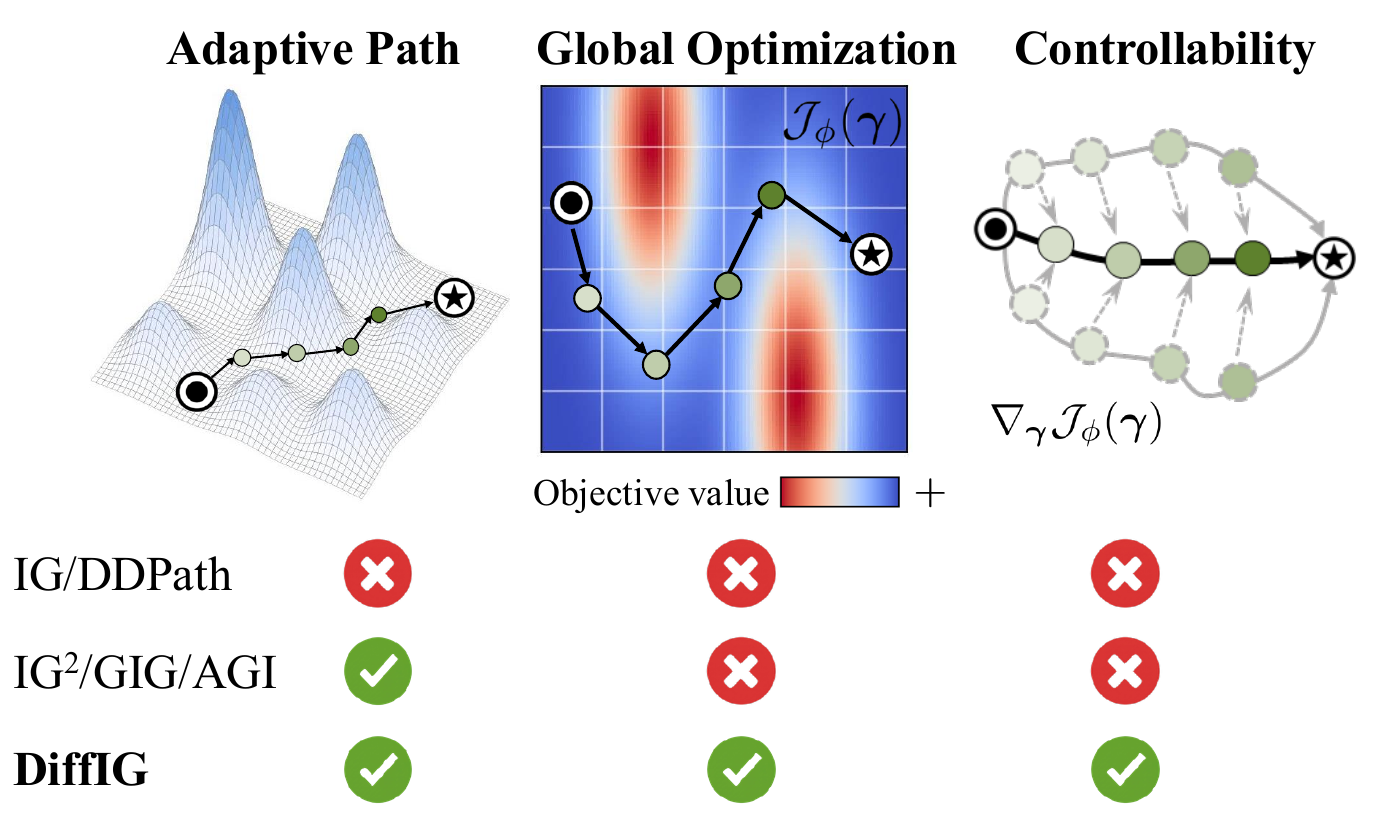}
    \caption{\textbf{DiffIG capabilities.} 
    \textbf{Adaptive Path} indicates whether a method can adapt its integration path to the complex decision surface of the model.
    \textbf{Global Optimization} refers to the ability to find a globally optimal path, rather than relying on greedy or local search heuristics.
    \textbf{Controllability} denotes the ability to flexibly guide path generation at inference time toward desired properties.}
    \label{fig:method_comparison_table}
\end{wrapfigure}

\paragraph{\textbf{Training.}}
We use diffusion probabilistic models \cite{sohl2015deep,ho2020denoising} to learn the conditional distribution $p_{\theta}(\gamma|y=\mathcal{J}(\gamma))$. However, instead of training this conditional distribution directly, we adopt an unconditional diffusion model with classifier guidance~\cite{dhariwal2021diffusion}. This approach involves training two separate models.
First, we train an unconditional diffusion model to learn the unconditional distribution of paths $p_{\theta}(\gamma)$ from the dataset $\mathcal{D}$. 
% As part of this procedure, we train an $\boldsymbol{\epsilon}$-model, $\boldsymbol{\epsilon}_\theta$, to predict the source noise instead of the mean $\boldsymbol{\mu}_\theta$. 
As part of this process, we train an $\boldsymbol{\epsilon}$-model, $\boldsymbol{\epsilon}_\theta$, to estimate the source noise rather than predicting the mean $\boldsymbol{\mu}_\theta$. 
This enables a simplified objective~\cite{ho2020denoising}:
\begin{align}
\label{eq:loss}
\mathcal{L}(\theta):=\mathbb{E}_{\tau,\boldsymbol{\epsilon},{\gamma}^0}[\|\boldsymbol{\epsilon} - \boldsymbol{\epsilon}_\theta({\gamma}^\tau)\|_2^2],
\end{align}
where $\tau \in \{1, ..., M\}$ is the diffusion timestep, $\boldsymbol{\epsilon} \sim \mathcal{N}(\mathbf{0},\mathbf{I})$ is the target noise, and ${\gamma}^\tau$ is the path corrupted by the noise $\boldsymbol{\epsilon}$ from the noiseless path ${\gamma}^0$.
Second, to steer the diffusion model towards generating paths with high objective scores, we use energy-guided sampling. This samples from a perturbed distribution $\tilde{p}_\theta({\gamma}^0)$ that prioritizes paths with higher objective scores $\mathcal{J}({\gamma}^0)$, defined as:
\begin{align}
    \tilde{p}_\theta({\gamma}^0) \propto p_\theta({\gamma}^0) \exp{(\mathcal{J}({\gamma}^0))}.
\end{align}
To enable this guidance, we train a separate regression network, $\mathcal{J}_\phi$. This network is trained to predict the objective score $\mathcal{J}({\gamma}^0)$ from a noisily perturbed version ${\gamma}^\tau$ at any timestep $\tau$. This is accomplished through the following mean-square-error (MSE) objective:
\begin{equation}
\label{Eq:MSE_qt_objective}
    \min_{\phi} \mathbb{E}_{\tau,\boldsymbol{\epsilon},{\gamma}^0} \left[
    \|\mathcal{J}_\phi({{\gamma}}^\tau) - \mathcal{J}({{\gamma}}^0)\|_2^2
\right].
\end{equation}
During the sampling stage, classifier guidance \cite{dhariwal2021diffusion} is employed, incorporating the gradients of $\mathcal{J}_\phi$ into the reverse diffusion process. Specifically, the mean of the reverse transition is updated as follows:
\begin{equation}
\label{eq:guided_sampling}
\tilde{p}_{\theta}({\gamma}^{\tau-1}|{\gamma}^\tau)=\mathcal{N}({\gamma}^{\tau-1}|\boldsymbol{\mu}_{\theta}({\gamma}^\tau)  + \omega \boldsymbol{\Sigma}^\tau g, \boldsymbol{\Sigma}^\tau) ,
\end{equation}
where $g=\nabla_{{{\gamma}}}\mathcal{J}_\phi({{\gamma}})|_{{{\gamma}} = \boldsymbol{\mu}_{\theta}({\gamma}^\tau)}$ and $\omega$ is the guidance scale that controls the strength of the guidance.

\setlength{\intextsep}{0pt}
\begin{wrapfigure}{L}{0.63\linewidth}
    \centering
    \begin{subfigure}[b]{0.32\linewidth}
        \centering
        \includegraphics[width=\linewidth]{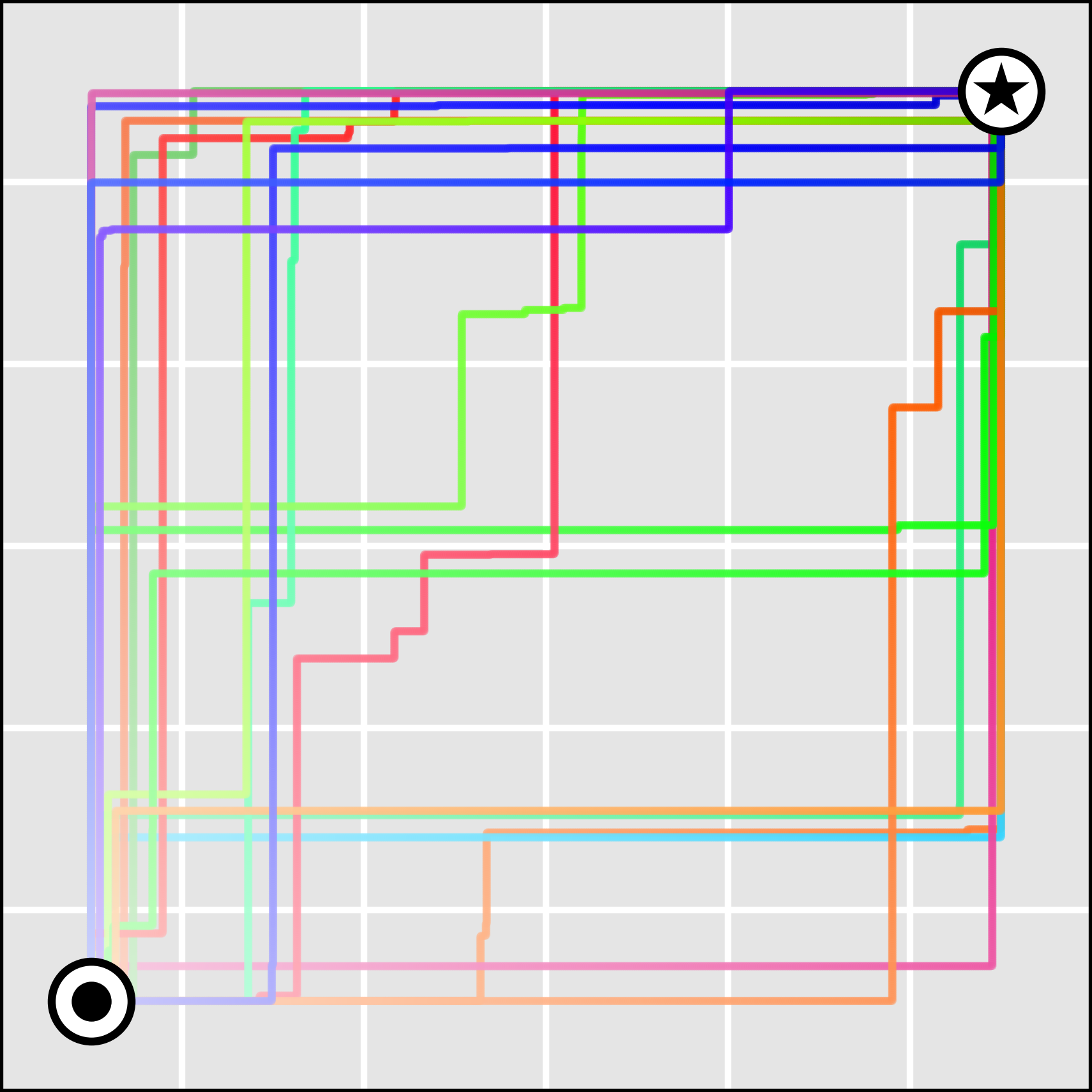}
        \caption{$\alpha = 1$}
        \label{subfig:sbp_alpha1}
    \end{subfigure}
    \hfill
    \begin{subfigure}[b]{0.32\linewidth}
        \centering
        \includegraphics[width=\linewidth]{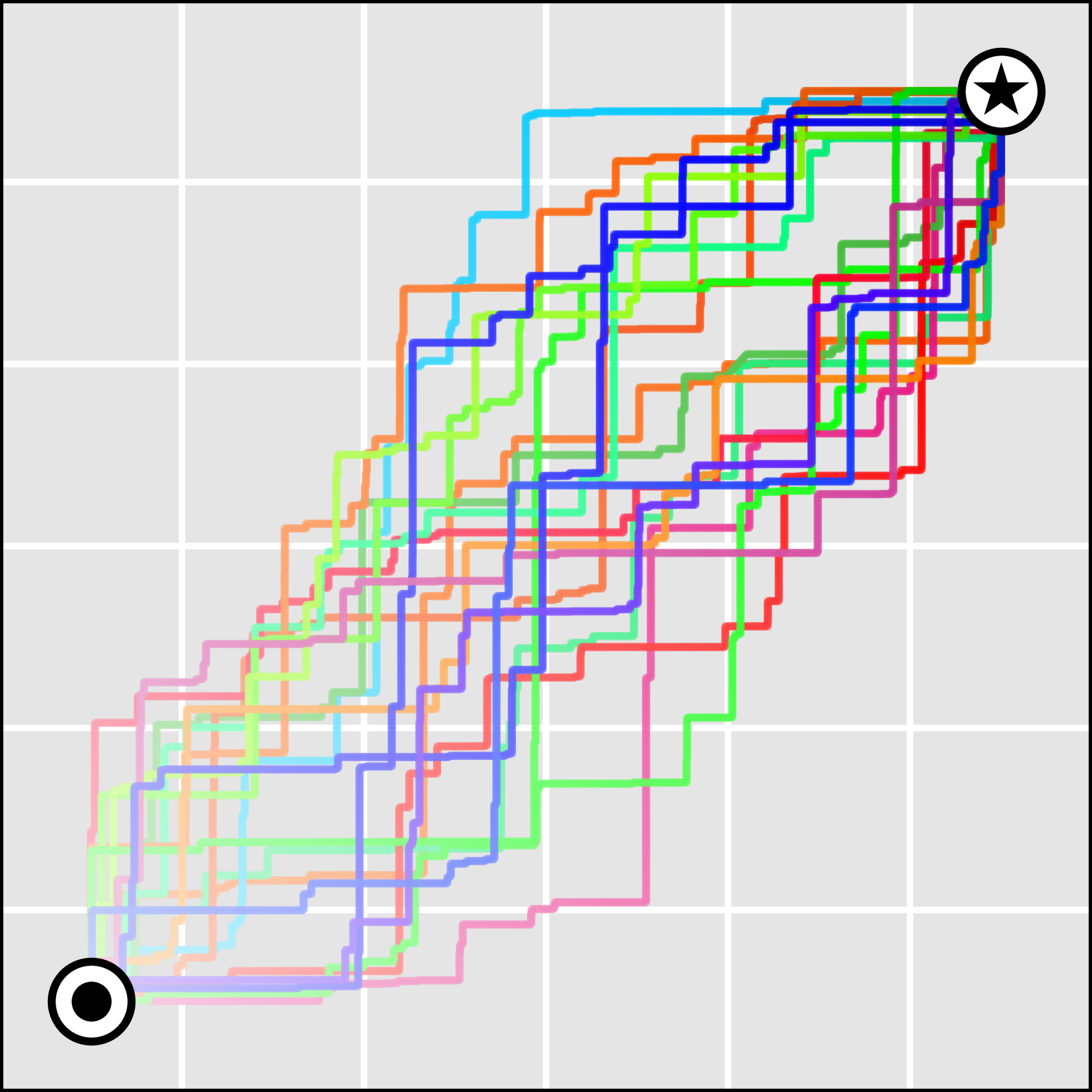}
        \caption{$\alpha = 10$}
        \label{subfig:sbp_alpha10}
    \end{subfigure}
    \hfill
    \begin{subfigure}[b]{0.32\linewidth}
        \centering
        \includegraphics[width=\linewidth]{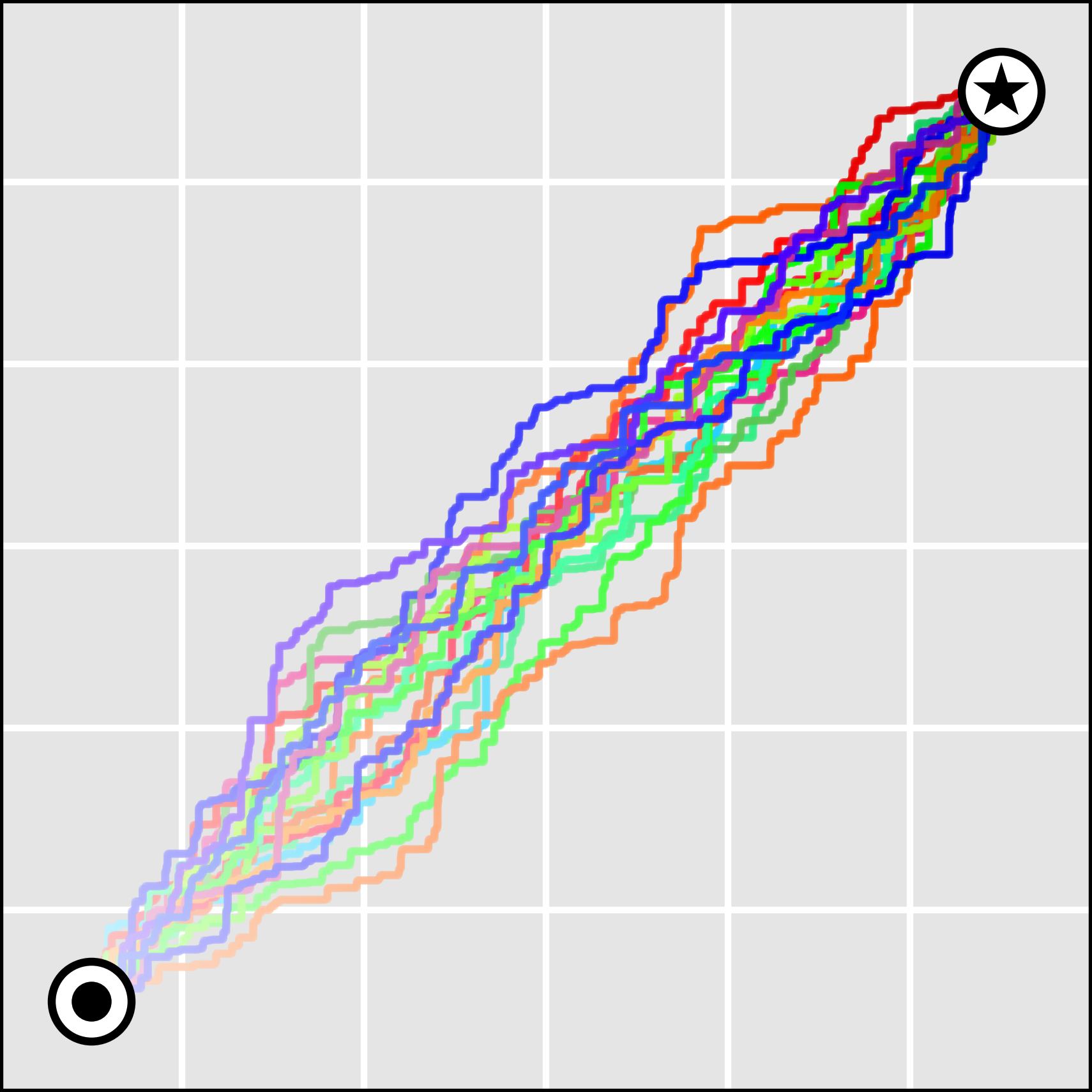}
        \caption{$\alpha = 100$}
        \label{subfig:sbp_alpha100}
    \end{subfigure}
    \caption{\textbf{2D visualization of stochastic paths generated by the Stick-Breaking Process (SBP).} 
    Each curve represents a sampled path from the baseline \protect{\raisebox{-.05cm}{\includegraphics[height=.35cm]{figures/mark_start_crop.png}}} to the input \protect{\raisebox{-.05cm}{\includegraphics[height=.35cm]{figures/mark_goal_crop.png}}}. The concentration parameter $\alpha$ controls the distribution of paths. A small $\alpha$ encourages high variance, producing diverse non-linear paths, while a large $\alpha$ constrains the paths to be smoother and closer to the linear path.}
    \label{fig:sbp_alpha}
\end{wrapfigure}

\subsection{Synthetic Path Data Generation}
Training the unconditional diffusion model and the guidance network requires a large and diverse dataset $\mathcal{D}$ of candidate paths. To this end, we adopt the Stick-Breaking Process (SBP) \cite{sethuraman1994constructive,jeon2023beyond} to generate a rich set of non-linear paths. 
Conceptually, SBP sequentially breaks a unit-length stick into random segments, where the resulting proportions define a random measure over the interval $[0,1]$. 

For each feature dimension $i$, we sample a measure $G_i$ via SBP:
\begin{align}
    G_i(t) &= \sum_{k=1}^{\infty} \pi_k \delta_{t_k}(t), \\
    \pi_k &= \beta_k \prod_{j=1}^{k-1} (1 - \beta_j), \\
    \beta_k &\sim \text{Beta}(1, \alpha), \quad t_k \sim H,
\end{align}
where $\delta_{t_k}(t)$ is a Dirac delta function centered at $t_k$, and $H$ is a base distribution. Following~\cite{jeon2023beyond}, we set the base distribution $H$ to the uniform distribution $U(0,1)$. 
The cumulative distribution function (CDF) $F_{G_i}(t)$ of $G_i$ is then used to define a non-linear interpolation between the baseline $x_i'$ and the input $x_i$:
\begin{align}
    \gamma_i(t) = x'_i + F_{G_i}(t)(x_i - x'_i).
\end{align}
This construction ensures that $\gamma_i(0) = x'_i$ and $\gamma_i(1) = x_i$. \edit{Since $F_{G_i}$ is a non-decreasing CDF with $F_{G_i}(0)=0$ and $F_{G_i}(1)=1$, $\gamma_i(t)$ transitions monotonically from $x'_i$ to $x_i$.} 
The concentration parameter $\alpha$ controls the diversity of generated paths. A large $\alpha$ produces paths close to the straight-line path used in standard IG, while a small $\alpha$ yields more diverse and non-linear trajectories. Figure~\ref{fig:sbp_alpha} illustrates how different $\alpha$ values lead to distinct path characteristics. To build $\mathcal{D}$, we generate paths across a wide range of $\alpha$, ensuring a rich diversity of trajectories ranging from nearly linear to highly complex.

\subsection{Guidance Objectives}
\label{subsec:objectives}

To guide the diffusion model toward generating high-quality attribution paths, we train the guidance network $\mathcal{J}_\phi$ to predict objective values that reflect the quality of each path. 
Specifically, we define two complementary objectives that capture distinct aspects of attribution quality: \textit{faithfulness} and \textit{complexity}. 

\paragraph{\textbf{Faithfulness.}}\label{para:faithfulness}

A faithful attribution path should accurately reflect how the model prediction changes when important or unimportant features are perturbed. 
We quantify this by measuring the difference in model prediction confidence under two complementary perturbations, \textit{insertion} and \textit{deletion}. The \textit{faithfulness score} of a path $\gamma$ is defined as
\begin{align}
    \text{Faithfulness}(\gamma)
    = \mathbb{E}_{r \in \mathcal{R}}
    \Big[ \text{Conf}^{\text{ins}}(r) - \text{Conf}^{\text{del}}(r) \Big],
    \label{eq:faithfulness}
\end{align}
where $\mathcal{R}$ is the set of perturbation ratios (e.g., $r \in \{0.1, 0.2, ..., 0.9\}$). 
Here, $\text{Conf}^{\text{ins}}(r)$ and $\text{Conf}^{\text{del}}(r)$ denote the softmax confidence of the model for the correct class when the most important features are inserted and deleted, respectively. 
A higher faithfulness score indicates that the attribution path better captures features that truly drive the decision of the model, leading to a larger drop in confidence when important features are removed and stable confidence when unimportant ones are altered.

\paragraph{\textbf{Complexity.}}

While high faithfulness is desirable, overly complex attributions that rely on many dispersed features may hinder interpretability. 
To regularize this, we introduce a \textit{complexity} measure inspired by attribution entropy~\cite{bhatt2020evaluating}. 
Given an attribution map $\mathcal{A} \in \mathbb{R}^{n}$ for a path $\gamma$, we define normalized weights $p_i = \frac{|\mathcal{A}_i|}{\sum_j |\mathcal{A}_j|}$ and the entropy-based complexity score:
\begin{align}
    \text{Comp}(\gamma)
    = - \sum_{i=1}^{n} p_i \log (p_i + \varepsilon),
    \label{eq:complexity}
\end{align}
where $\varepsilon$ is a small constant added for numerical stability. 
Lower complexity corresponds to sparser, more human-interpretable explanations, while higher complexity indicates diffuse and harder-to-interpret attributions.

\paragraph{\textbf{Combined Guidance.}}

To guide the diffusion model toward generating paths that are both faithful and interpretable, 
we employ two complementary guidance networks: 
$\mathcal{J}_{\phi_1}$ for \textit{faithfulness} and $\mathcal{J}_{\phi_2}$ for \textit{complexity}. 
Each network is independently trained to regress its respective objective from noisy path samples 
following the regression formulation in Eq.~\eqref{Eq:MSE_qt_objective}. 

During the reverse diffusion process, the dual guidance signals are combined into a unified gradient field that jointly influences the path generation:
\begin{align}
\label{eq:combined_guidance}
    g =
    \Big(
    \lambda_{\text{faith}} \nabla_{{\gamma}}
    \mathcal{J}_{\phi_1}({\gamma})
    +
    \lambda_{\text{comp}} \nabla_{{\gamma}}
    \mathcal{J}_{\phi_2}({\gamma})
    \Big)\Big|_{{\gamma}=\boldsymbol{\mu}_{\theta}({\gamma}^\tau)},
\end{align}
where $\lambda_{\text{faith}}$ and $\lambda_{\text{comp}}$ are coefficients balancing the two guidance strengths.
This combined gradient $g$ is injected into the reverse transition of the diffusion model in Eq.~\eqref{eq:guided_sampling}, steering the denoising trajectory toward attribution paths that are both faithful to the model reasoning
and sufficiently simple to interpret.
\edit{In all our main experiments, we fix the global guidance scale $\omega{=}1$ in Eq.~\eqref{eq:guided_sampling} and control the overall guidance strength directly through $\lambda_{\text{faith}}$ and $\lambda_{\text{comp}}$; the effect of varying $\omega$ is examined separately in Appendix~\ref{appx:latent_pixel}.}

\subsection{Practical Algorithm}

\paragraph{\textbf{Path Generation over Latent Space.}}

Generating diffusion paths directly in the pixel space is computationally prohibitive due to the high dimensionality of images. Instead, we employ a pre-trained $\beta$-VAE \cite{higgins2017beta, li2024autoregressive} encoder $\mathcal{E}_\text{enc}$ and decoder $\mathcal{D}_\text{dec}$ to construct paths in a lower-dimensional latent space. Given an input $x$ and its baseline $x'$, we first obtain their latent representations $z = \mathcal{E}_\text{enc}(x)$ and $z' = \mathcal{E}_\text{enc}(x')$. The diffusion model is trained in the latent space to learn the distribution of latent trajectories $\gamma_z(t)$ that, when decoded, yield non-linear paths $\gamma(t)$ in the image space. At inference time, it generates latent trajectories $\gamma_z(t)$, which are decoded back to the image domain via $\mathcal{D}_\text{dec}$:
\begin{align}
    \gamma(t) = \mathcal{D}_\text{dec}(\gamma_z(t)), \quad 
    \text{where } \gamma_z(0) = z', \ \gamma_z(1) = z.
\end{align}

\paragraph{\textbf{Stochastic Multi-path Sampling.}}

Diffusion sampling naturally enables stochastic generation of multiple attribution paths from the guidance-conditioned distribution $\tilde{p}_\theta({\gamma}) \propto p_\theta({\gamma}) \exp(\mathcal{J}({\gamma}))$. To leverage this property, we generate $N$ distinct path candidates 
$\{ \gamma^{(k)}(t) \}_{k=1}^N$ between the baseline $x'$ and the input $x$, 
where the superscript $(k)$ indexes each sampled path. 
We consider two strategies to utilize these multiple paths:

\textit{(i) Best-of-$N$ Search:}
Among the sampled candidates, we select the path that best balances the two guidance objectives of faithfulness and complexity:
\begin{align}
    \gamma^{*} 
    &= \argmax_{\gamma^{(k)}} 
    \Big[
        \lambda_{\text{faith}} \, \mathcal{J}_{\phi_1}(\gamma^{(k)}) 
        + \lambda_{\text{comp}} \, \mathcal{J}_{\phi_2}(\gamma^{(k)}) 
    \Big].
    \label{eq:best_of_n}
\end{align}
The selected path $\gamma^{*}$ is then used to compute the final attribution via Integrated Gradients (IG).

\textit{(ii) Multi-path Aggregation:}
Alternatively, we compute attributions for all sampled paths and aggregate them to obtain a more consistent explanation. Let $\mathcal{A}(x; \gamma^{(k)})$ denote the attribution map computed along a specific path $\gamma^{(k)}$. We then aggregate the path-dependent attributions to form the final attribution map for $x$:
\begin{align}
    \bar{\mathcal{A}}(x)
    = \mathcal{M}\big(\{ \mathcal{A}(x; \gamma^{(k)}) \}_{k=1}^{N} \big),
\end{align}
where $\mathcal{M}$ denotes an aggregation operator such as mean, median, or variance-weighted mean. This aggregation reduces the stochastic variance among sampled paths and yields more reliable and consistent attributions. Further details on each aggregation strategy are provided in Appendix~\ref{appx:aggregation_methods}.

\paragraph{\textbf{Axiomatic Properties.}}
Since DiffIG computes attributions via the path integral in Eq.~\eqref{eq:path_general}, it is a path-based attribution method and naturally inherits the axiomatic properties established for this class of methods~\citep{sundararajan2017axiomatic}. In particular, we formally prove in Appendix~\ref{appx:axioms} that DiffIG satisfies \emph{Completeness} (attributions sum to $f(x) - f(x')$), \emph{Sensitivity} (features with zero partial derivatives receive zero attribution), and \emph{Implementation Invariance} (attributions depend solely on the model's input-output behavior). These guarantees hold for any individual path generated by the diffusion model, and completeness is additionally preserved under mean aggregation of multiple paths.

\section{Experiments}

% In this section, we empirically validate the effectiveness of our proposed DiffIG method. 
% Specifically, we investigate \textbf{(1)} whether DiffIG can generate quantitatively and qualitatively more perceptually aligned explanations, 
In this section, we empirically evaluate the effectiveness of our proposed DiffIG method. Specifically, we examine \textbf{(1)} whether DiffIG produces explanations that are more perceptually aligned in both quantitative and qualitative terms, 
\textbf{(2)} whether its performance can be further enhanced through multi-path sampling, and \textbf{(3)} whether inference-time guidance can effectively steer path generation toward desired objectives. A complete description of our experimental setup and implementation details is provided in Appendices~\ref{appx:experiment_setup} and~\ref{appx:optimal_hyperparameters}.

\subsection{Experimental Setup}\label{subsec:experiment_setup}

\paragraph{\textbf{Datasets and Models}}

We evaluate the effectiveness of DiffIG on two real-world image datasets: 
(1) the Oxford-IIIT Pet dataset~\cite{parkhi2012cats}, which contains images of 37 pet categories, 
and (2) the Mini-ImageNet dataset~\cite{sun2019meta}, a subset of the larger ImageNet corpus comprising 100 categories. For classification, we use publicly available pre-trained models including VGG16~\citep{simonyan2015very}, \edit{InceptionV1}~\citep{szegedy2015going}, and ResNet18~\citep{he2016deep}, which are further finetuned on each dataset.

\paragraph{\textbf{Baselines}}

To demonstrate the effectiveness of DiffIG, we compare it against seven representative path-based attribution methods. Our comparison includes IG~\cite{sundararajan2017axiomatic}; optimization-based methods (GIG~\cite{kapishnikov2021guided}, IG$^2$~\cite{zhuo2024ig2}, and AGI~\cite{pan2021explaining}); stochastic process-based methods (SPI~\cite{jeon2023beyond}); and manifold-informed methods (EIG~\cite{jha2020enhanced}, MIG~\cite{zaher2024manifold}). 

\paragraph{\textbf{Evaluation Metrics}}

To quantitatively evaluate the quality of attribution maps, we adopt the widely used pixel perturbation framework~\citep{petsiuk2018rise}, which evaluates how the model output varies as important pixels are progressively inserted or deleted. Specifically, we use the Insertion and Deletion metrics~\citep{petsiuk2018rise}, and their difference, the DiffID score~\citep{yang2022re}. The Insertion metric computes the area under the confidence curve (AUC) as pixels are added in order of importance, while Deletion does so as pixels are removed. The DiffID score captures the gap between the two AUCs, offering a unified measure of both sensitivity and selectivity, where higher values indicate more faithful and discriminative attributions.

\begin{figure*}[t!]
    \centering
    \includegraphics[width=\linewidth]{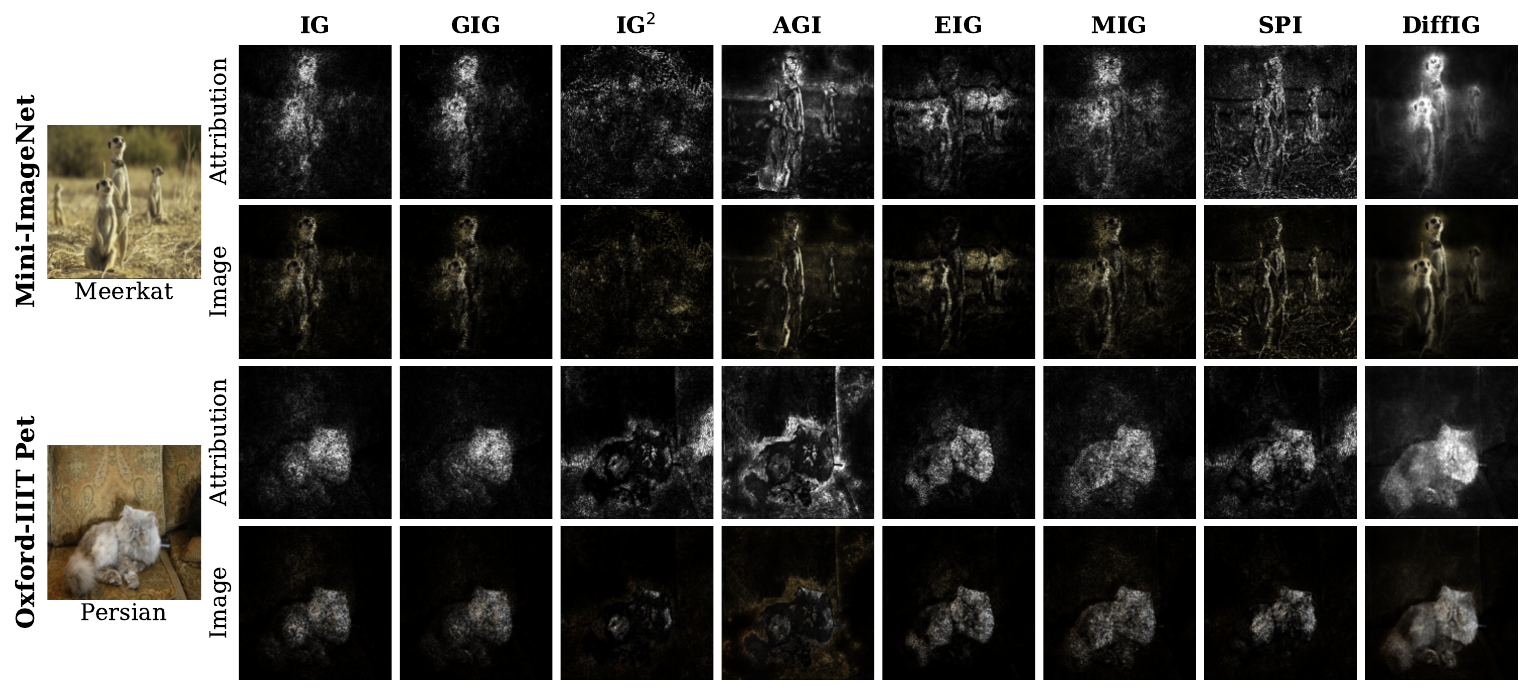}
    \vspace{-0.5cm}
    \caption{\textbf{Qualitative comparison of path-based attribution methods on Oxford-IIIT Pet and Mini-ImageNet using VGG16.} The \textit{Attribution} rows show the attribution maps, and the \textit{Image} rows show them overlaid on the input images.}
    \label{fig:qual}
    \vspace{-0.3cm}
\end{figure*}

\subsection{Qualitative Results}

We qualitatively compare the DiffIG attribution maps with those from other path-based methods. Attribution results are computed for random \edit{validation} images from the Oxford-IIIT Pet and Mini-ImageNet datasets. As illustrated in Figure~\ref{fig:qual}, DiffIG produces attribution maps that more precisely highlight important pixels and more comprehensively localize salient objects. For instance, in the \textit{Meerkat} example, while other methods emphasize only the two central figures, DiffIG additionally identifies the third meerkat on the far right\edit{(Appendix~\ref{appx:meerkat})}. 
% \edit{Since attribution is model-specific, we verify that this far-right region carries genuine classifier evidence rather than DiffIG over-amplification: inserting the region raises the target-class logit and probability, while deleting it lowers the target logit across black, mean, and blur in-painting (Appendix~\ref{appx:meerkat}).}
Furthermore, local optimization approaches such as IG$^2$ and AGI tend to follow spurious path features, leading to misleading attribution aggregation. This issue is evident in the \textit{Persian} example, where these methods are distracted by the high-frequency texture of the sofa, incorrectly assign importance to the background, and create spurious \textit{ghost cat} attributions on the upper right side of the image. 
In comparison, DiffIG accurately attributes importance to both the cat’s face and body, reflecting a more complete and faithful interpretation of the model reasoning process. Additional qualitative results appear in Appendix~\ref{appx:additional_qual}.

\setlength{\intextsep}{0pt}
\begin{wrapfigure}[11]{l}{0.58\linewidth}
    \centering
    \includegraphics[width=\linewidth]{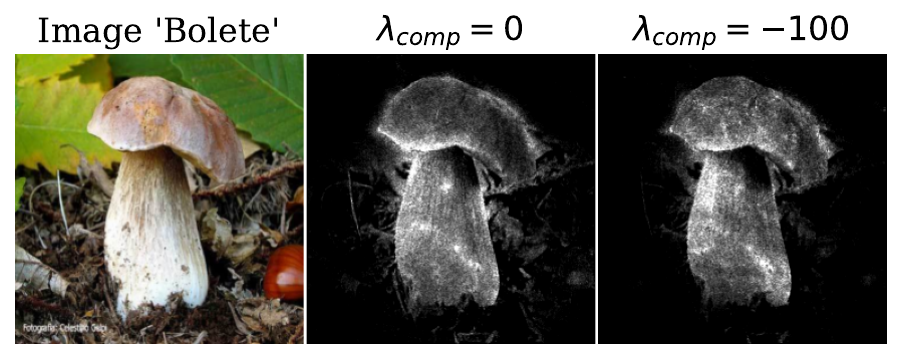}
    \caption{\textbf{Effect of Complexity Guidance.} The attributions for $\lambda_{\text{comp}}=0$ (no guidance) and $-100$ (strong), showing sparser and more focused attributions as the guidance strength increases.}
    % The image sample is from the Mini-ImageNet validation set.
    \label{fig:comp_ablation}
\end{wrapfigure}

\vspace{0.2cm}
\noindent \textbf{Flexible Control of Sparseness.} Figure~\ref{fig:comp_ablation} illustrates that DiffIG enables flexible control over the sparsity of attributions through the complexity guidance. By increasing the guidance strength, DiffIG produces sparser and more focused attributions while effectively suppressing irrelevant background features. 

\subsection{Quantitative Results}
Table~\ref{tab:quant} presents the quantitative comparison of DiffIG with other path-based methods. Our approach consistently matches or surpasses the performance of existing baselines across various datasets and model architectures. We attribute these improvements to the ability of DiffIG to perform \edit{a guided, non-greedy search over generated path candidates}, rather than relying on greedy or local search heuristics as in methods such as GIG, IG$^2$, and AGI. Figure~\ref{fig:ablation_aggregation} further demonstrates the effectiveness of the multi-path sampling in DiffIG. This highlights that generating and utilizing multiple diffusion-sampled paths leads to more stable and consistent attributions.

\begin{table*}[t!]
\centering

\caption{\textbf{Quantitative results across different datasets and model architectures.} 
We compare DiffIG with various path-based attribution methods on the full Oxford-IIIT Pet \edit{validation} set (370 images) and 500 randomly sampled images from the Mini-ImageNet \edit{validation} set. \textbf{Bolded} values indicate the best performance, and \underline{underlined} values indicate the second best.}
\vspace{-0.3cm}
\resizebox{\textwidth}{!}{ 
\begin{tabular}{llrrrrrrrrr}
\toprule
& & \multicolumn{3}{c}{\textbf{ResNet18}} & \multicolumn{3}{c}{\textbf{VGG16}} & \multicolumn{3}{c}{\textbf{\edit{Inception V1}}} \\
\cmidrule(lr){3-5} \cmidrule(lr){6-8} \cmidrule(lr){9-11}
\textbf{Data} & \textbf{Method} & \textbf{DiffID ($\uparrow$)} & \textbf{Ins ($\uparrow$)} & \textbf{Del ($\downarrow$)} & \textbf{DiffID ($\uparrow$)} & \textbf{Ins ($\uparrow$)} & \textbf{Del ($\downarrow$)} & \textbf{DiffID ($\uparrow$)} & \textbf{Ins ($\uparrow$)} & \textbf{Del ($\downarrow$)} \\
\midrule
& IG~\citep{sundararajan2017axiomatic} & 0.3213 & 0.4892 & 0.1679 & 0.4654 & 0.5523 & 0.0869 & 0.3051 & 0.4333 & \underline{0.1282} \\
    \rowcolor{palegray}
\cellcolor{white} &  GIG~\citep{kapishnikov2021guided} & \underline{0.3486} & \underline{0.5065} & \underline{0.1579} & 0.4859 & 0.5659 & \underline{0.0799} & 0.3085 & 0.4376 & 0.1290 \\
& IG$^2$~\cite{zhuo2024ig2} & 0.2767 & 0.4385 & 0.1619 & 0.3016 & 0.4259 & 0.1243 & 0.2228 & 0.3650 & 0.1422 \\
    \rowcolor{palegray}
\cellcolor{white} & AGI~\cite{pan2021explaining} & 0.3242 & 0.4879 & 0.1637 & \underline{0.4930} & \underline{0.5861} & 0.0930 & \underline{0.4092} & \underline{0.5156} & \textbf{0.1064} \\
& EIG~\citep{jha2020enhanced} & 0.2680 & 0.4532 & 0.1852 & 0.3950 & 0.4898 & 0.0947 & 0.2698 & 0.4108 & 0.1410 \\
    \rowcolor{palegray}
\cellcolor{white} & MIG~\citep{zaher2024manifold} & 0.2552 & 0.4316 & 0.1764 & 0.3681 & 0.4723 & 0.1041 & 0.2602 & 0.3963 & 0.1361 \\
& SPI~\citep{jeon2023beyond} &0.2738 & 0.4414 & 0.1676 & 0.4202 & 0.5046 & 0.0844 & 0.2292 & 0.3751 & 0.1459 \\
\cmidrule(lr){2-11}
    \rowcolor{lightgray}
    \multirow{-9}{*}{\rotatebox[origin=c]{90}{\textbf{Oxford-IIIT Pet}}}
\cellcolor{white} & \textbf{DiffIG} (Ours) & \textbf{0.5067} & \textbf{0.6300} & \textbf{0.1233} & \textbf{0.6368} & \textbf{0.7128} & \textbf{0.0760} & \textbf{0.4817} & \textbf{0.6112} & 0.1296 \\
\midrule
& IG~\citep{sundararajan2017axiomatic} & 0.2147 & 0.4522 & 0.2375 & 0.3158 & 0.4786 & 0.1628 & 0.2975 & 0.5168 & 0.2194 \\
    \rowcolor{palegray}
\cellcolor{white} & GIG~\citep{kapishnikov2021guided} & 0.1896 & 0.4369 & 0.2474 & 0.3222 & 0.4803 & 0.1581 & 0.2861 & 0.5152 & 0.2291 \\
& IG$^2$~\cite{zhuo2024ig2} & 0.1505 & 0.3858 & 0.2353 & 0.2251 & 0.4091 & 0.1840 & 0.2446 & 0.4423 & \textbf{0.1977} \\
    \rowcolor{palegray}
\cellcolor{white} & AGI~\cite{pan2021explaining} & \underline{0.2949} & \underline{0.5018} & \textbf{0.2068} & \underline{0.4228} & \underline{0.5695} & \textbf{0.1466} & \textbf{0.3579} & \underline{0.5714} & 0.2135 \\
& EIG~\citep{jha2020enhanced} & 0.1480 & 0.4166 & 0.2686 & 0.2385 & 0.4374 & 0.1988 & 0.2312 & 0.4795 & 0.2483 \\
    \rowcolor{palegray}
\cellcolor{white} & MIG~\citep{zaher2024manifold} & 0.2227 & 0.4398 & 0.2171 & 0.3106 & 0.4660 & \underline{0.1554} & 0.3057 & 0.5162 & \underline{0.2105} \\
\cellcolor{white} & SPI~\citep{jeon2023beyond} & 0.2325 & 0.4491 & \underline{0.2165} & 0.2763 & 0.4463 & 0.1700 & 0.2831 & 0.5112 & 0.2281 \\
\cmidrule(lr){2-11}
    \rowcolor{lightgray}
    \multirow{-9}{*}{\rotatebox[origin=c]{90}{\textbf{Mini-ImageNet}}}
\cellcolor{white} & \textbf{DiffIG} (Ours) & \textbf{0.3126} & \textbf{0.5747} & 0.2621 & \textbf{0.4298} & \textbf{0.6037} & 0.1739 & \underline{0.3578} & \textbf{0.6285} & 0.2707 \\
\bottomrule
\end{tabular}
} 
\label{tab:quant}
\vspace{-0.3cm}
\end{table*}

\setlength{\intextsep}{0pt}
\begin{wrapfigure}{l}{0.5\linewidth}
    % \vspace{-0.5em}
    \vspace{-0.1cm}
    \centering
    \includegraphics[width=\linewidth]{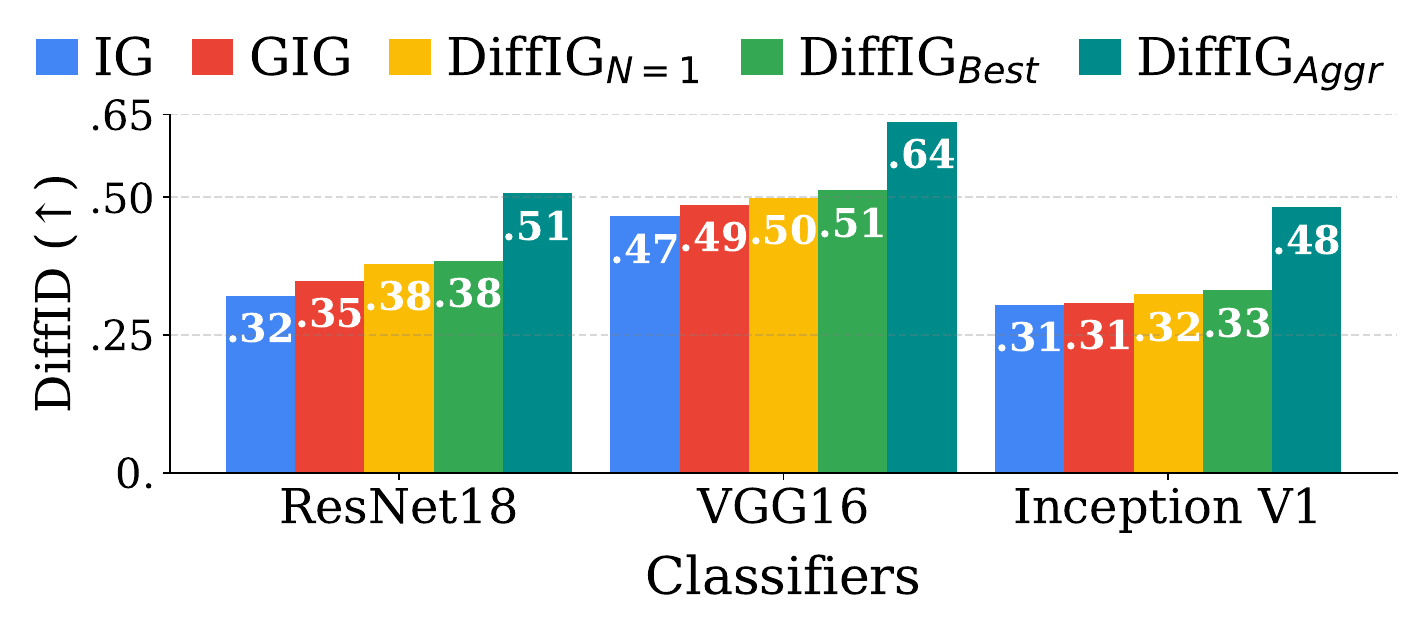}
    % \vspace{-0.3cm}
    \caption{\textbf{Effect of Multi-path Sampling of DiffIG.} The DiffIG variants correspond to \textit{one-path generation (N=1)}, \textit{\edit{Best-of-$N$} Search (Best)}, and \textit{multi-path mean aggregation (Aggr)}.}
    % \vspace{-0.8em}
    \vspace{0.1cm}
    \label{fig:ablation_aggregation}
\end{wrapfigure}

 \paragraph{\textbf{Validity of Guided Generation.}}
DiffIG does not directly optimize the final evaluation pipeline, since the guidance predictor is a proxy regressor trained on noisy latent paths, while the final attribution faithfulness is evaluated later in pixel space. 
We also jointly optimize faithfulness and complexity, which discourages degenerate solutions that exploit a single score.
This is further supported by the Pareto frontier analysis in Appendix~\ref{appx:optimal_hyperparameters}, where the trade-off between Insertion and Deletion confirms that faithfulness improves through balancing multiple objectives. 
\edit{As further evidence that DiffIG does not merely fit the evaluation metric, it ranks first on the independent Faithfulness Correlation metric (computed on absolute attributions) from Quantus~\cite{hedstrom2023quantus}, where higher is better (DiffIG $-0.0025$ vs.\ AGI $-0.0042$ and GIG $-0.0093$ on Oxford-IIIT Pet with ResNet18; see Appendix~\ref{appx:quantus}).}

\paragraph{\textbf{Additional Experiments.}}
We provide further analyses in the appendix.
Appendix~\ref{appx:ddpath_rebuttal} compares DiffIG against DDPath~\cite{lei2024denoising}, a concurrent diffusion-based attribution method, demonstrating consistent improvements in DiffID and Insertion.
Appendix~\ref{appx:vit_rebuttal} evaluates DiffIG on ViT-B/16~\cite{dosovitskiy2021an}, confirming that performance gains generalize beyond CNNs to transformer architectures.
Appendix~\ref{appx:runtime_rebuttal} presents a runtime analysis showing that DiffIG achieves a favorable trade-off between computational cost and attribution quality.
Appendix~\ref{appx:vae_rebuttal} reports sensitivity analysis across different VAE backbones (MAR~\cite{li2024autoregressive}, Stable Diffusion 2.1~\cite{rombach2022high}, Kandinsky 2.1~\cite{razzhigaev2023kandinsky}), confirming robustness to the choice of generative prior.
\edit{Appendix~\ref{appx:path_alpha} visualizes path dependence and shows robustness to the SBP concentration $\alpha$, and Appendix~\ref{appx:baseline_choice} shows robustness to the baseline input, where a black--white mean mitigates the dark-region limitation.
Appendix~\ref{appx:latent_pixel} examines the latent--pixel domain gap on a controlled toy example; Appendix~\ref{appx:smoothing} compares against IG smoothing variants~\cite{smilkov2017smoothgrad}; and Appendices~\ref{appx:denoising_steps} and~\ref{appx:failure_cases} give step-wise visualizations and a failure-case analysis.}

\section{Related Work}
\label{sec:related_work}

\paragraph{\textbf{Path-based Attribution Methods.}}

As discussed in Section~\ref{sec:background_path_dependency}, the path-dependent nature of attributions in highly discontinuous gradient fields poses a key challenge for reliable explanations. 
Therefore, the central problem lies in identifying a desirable or faithful integration path.

One line of research formulates path identification as an optimization problem. For example, Guided IG (GIG)~\citep{kapishnikov2021guided} argues that high-curvature areas of the output surface cause noisy gradients and should be avoided. Geometrically Guided IG (GGIG)~\citep{rahman2022geometrically} selects a path toward class-specific activation maximization. IG$^2$~\citep{zhuo2024ig2} seeks a path with steeper gradients to avoid saturation areas. Adversarial Gradient Integration (AGI)~\citep{pan2021explaining} uses loss minimization paths from adversarial examples. \edit{Spectral IG~\citep{kim2026spectral} instead refines attributions in a coarse-to-fine manner using the singular value decomposition.} However, these methods rely on heuristic, locally optimized search procedures rather than directly optimizing global objectives, which can result in sub-optimal paths that may not be holistically faithful.

Another line of work models the path as a distribution and applies stochastic processes. SAlient Manipulation Path (SAMP)~\citep{zhang2024path} defines the path selection process as an additive stochastic process, akin to Brownian motion, and regards attribution as a cumulative process where contributions are allocated sequentially. Similarly, Stick-breaking Path Integration (SPI)~\citep{jeon2023beyond} also defines a distribution over possible paths using a stick-breaking process. Despite capturing stochastic variability, these methods remain model-agnostic and fail to account for the underlying decision surface.

A recent line of work incorporates data manifold information to produce perceptually faithful attributions. For example, Enhanced Integrated Gradients (EIG)~\cite{jha2020enhanced} and Manifold IG (MIG)~\cite{zaher2024manifold} utilize the learned latent space of a variational autoencoder (VAE) to construct linear or geodesic paths that better respect the underlying data manifold. \edit{Manifold-Aligned Guided IG~\citep{kim2026manifoldaligned} similarly steers the path to remain aligned with the data manifold.} Denoising Diffusion Path (DDPath)~\cite{lei2024denoising} leverages a pre-trained diffusion model to construct the attribution path. The core idea is that the reverse diffusion process, which progressively transforms noise into an image, can be interpreted as a path that stays closer to the data manifold than a simple straight line. While these methods incorporate the data manifold into path generation, they remain constrained by fixed generative processes, lacking flexibility and controllable adaptation in path generation.

\paragraph{\textbf{Planning with Diffusion Models.}}

Diffusion probabilistic models capture complex data distributions by progressively denoising noise into structured samples~\cite{sohl2015deep,ho2020denoising}. This iterative process can be interpreted as learning the \textit{score} of the data distribution~\cite{song2019generative}, with connections to score matching~\cite{hyvarinen2005estimation} and energy-based models (EBMs)~\cite{du2019implicit,nijkamp2019learning,grathwohl2020learning}. In offline reinforcement learning (RL), they have been applied to high-dimensional synthesis~\cite{lu2023synthetic,wang2024prioritized,lee2025state} and to planning~\citep{janner2022planning,lee2023refining,lee2025local,lee2026refining}, where the goal is to synthesize action trajectories achieving high returns. For instance, Diffuser~\citep{janner2022planning} steers an unconditional diffusion model toward high-return trajectories via a return predictor at inference~\citep{dhariwal2021diffusion}, while Decision Diffuser~\citep{ajay2023is} instead applies \emph{classifier-free guidance} to condition directly on reward or constraint signals. Subsequent works extend this paradigm through adaptive fine-tuning~\citep{liang2023adaptdiffuser}, hierarchical control~\citep{li2023hierarchical,chen2024simple}, and multi-agent settings~\citep{zhu2023madiff}.

Inspired by these advances, DiffIG reformulates finding an optimal attribution path as a conditional generative modeling problem, highlighting diffusion models as a unified framework for generating adaptive paths for faithful and interpretable attribution.

\section{Conclusion}

We introduced \textit{Diffusion Integrated Gradients} (DiffIG), a path-based attribution method that formulates integration path generation as a conditional generative modeling problem. By leveraging diffusion models trained on stochastic path distributions with guided sampling at inference, DiffIG enables adaptive, \edit{guided non-greedy search over generated path candidates}, and inference-time controllable path generation, transforming attribution into a flexible generative process steerable toward user-defined objectives. Extensive experiments show that DiffIG matches or outperforms existing path-based methods, producing explanations that are both perceptually aligned and faithful, suggesting that generative modeling is a principled framework for controllable and interpretable attribution.

\paragraph{\textbf{Limitations.}}

DiffIG currently relies on a fixed baseline (e.g., a black image), which can yield less informative attributions for visually dark regions such as dark eyes. \edit{As a simple remedy, averaging the black- and white-baseline attributions mitigates this dark-region issue without retraining(Appendix~\ref{appx:baseline_choice});} adaptive or learned baselines and modalities such as text or tabular data are promising directions. 
% Moreover, developing more effective guidance and alignment mechanisms for diffusion models could further enhance the performance and robustness of DiffIG. 

\section*{\textbf{Acknowledgements}}
This work was supported by the Institute for Information \& Communications Technology Planning \& Evaluation (IITP), funded by the Ministry of Science and ICT (MSIT), Korea, under grants RS-2019-II190075 (Artificial Intelligence Graduate School Program (KAIST)), RS-2022-II220984 (Development of Artificial Intelligence Technology for Personalized Plug-and-Play Explanation and Verification of Explanation), RS-2024-00457882 (AI Research Hub Project), and RS-2026-26644375 (Leading Generative AI Human Resources Development), and by the MSIT InnoCORE program (N10250156).

% \subsection*{Code Availability}
% \label{app:code}
% We provide the PyTorch implementation of our proposed method, Diffusion Integrated Gradients (DiffIG). 
% The implementation consists of the core path generation based on an unconditional diffusion model with classifier guidance and the attribution computation via path integration. 
% The source code of this paper has been made publicly available at \diffigcode.

% ---- Bibliography ----
\bibliographystyle{splncs04}
\bibliography{main}
% --- Supplementary ---
\clearpage

% --- Supplementary Title Start ---
\begin{center}
{\Large \textbf{Supplementary Material for:}}

\medskip

{\Large \textbf{Diffusion Integrated Gradients: Controllable Path Generation for Flexible Feature Attribution}}

\bigskip\bigskip

\large
Soyeon Kim\textsuperscript{1,3}\orcidlink{0009-0001-5037-0902}
\quad
Kyowoon Lee\textsuperscript{2}\orcidlink{0009-0001-3600-8237}\textsuperscript{\dag}
\quad
Jaesik Choi\textsuperscript{1,3}\orcidlink{0000-0002-4663-3263}\textsuperscript{\dag}

\medskip

\normalsize
\textsuperscript{1}KAIST, Korea
\quad
\textsuperscript{2}Ajou University, Korea
\quad
\textsuperscript{3}INEEJI Corp., Korea

\medskip

\texttt{kyowoon.lee1924@gmail.com, \{soyeon.k,jaesik.choi\}@kaist.ac.kr}

\medskip

\textsuperscript{\dag}Co-corresponding authors.

\end{center}
% --- Supplementary Title End ---

\bigskip\bigskip

% toc start
\renewcommand{\contentsname}{\large Table of Contents}
\addtocontents{toc}{\protect\setcounter{tocdepth}{2}}
% --- Supplementary ToC: list appendix sections only.
% Body entries are filtered out of the printed ToC via a marker;
% PDF bookmarks/outline for the main body are unaffected.
\newif\ifshowtocentry \showtocentryfalse
\newcommand{\startappendixtoc}{\showtocentrytrue}
\begingroup
\let\clearpage\relax
\let\oldnumberline\numberline
\renewcommand{\numberline}[1]{\oldnumberline{\textbf{#1}}}
\let\oldcontentsline\contentsline
\renewcommand{\contentsline}[4]{\ifshowtocentry\oldcontentsline{#1}{#2}{#3}{#4}\fi}
\tableofcontents
\endgroup

\appendix
% Give appendix sections hyperref anchors that cannot collide with the
% body's section.1/2/... (fixes ToC links jumping to the main paper under
% pdflatex/Overleaf, where \appendix resets the section counter).
\renewcommand{\theHsection}{appendix.\arabic{section}}
\renewcommand{\theHsubsection}{appendix.\arabic{section}.\arabic{subsection}}
\addtocontents{toc}{\protect\startappendixtoc}
\clearpage
\section{Notations}
\begin{minipage}{\textwidth} 
    \captionsetup{justification=centering, type=figure,hypcap=false}
    \captionof{table}{\textbf{Table of notation}.}
    \label{tab:notation} 
    \scriptsize 
    \centering
    \renewcommand{\arraystretch}{1.1} 
    \resizebox{0.999\textwidth}{!}{ 
    \begin{tabularx}{\textwidth}{ l >{\raggedright\arraybackslash}X }
    \toprule
    \textbf{Notation} & \textbf{Description} \\
    \midrule
    \multicolumn{2}{l}{\textbf{Paths and Attribution}} \\
    \addlinespace
    $x \in \mathbb{R}^n$ & An input to the model, with $n$ features. \\
    $x'$ & A chosen baseline input (e.g., a black or white image). \\
    $f: \mathbb{R}^n \to \mathbb{R}$ & A differentiable model whose prediction we want to explain. \\
    $\gamma(t)$ & An integration path in the input space from $x'$ to $x$, parameterized by $t \in [0, 1]$. \\
    $\dot{\gamma}(t)$ & The path velocity w.r.t. $t$, defined as $\frac{\diff \gamma(t)}{\diff t}$. \\
    $\mathcal{A}_i(\gamma)$ & Attribution score for the $i$-th feature along path $\gamma$. \\
    $\frac{\partial f(\gamma(t))}{\partial \gamma_i(t)}$ & Gradient of model output w.r.t. the $i$-th feature at $\gamma(t)$. \\
    $\frac{\partial \gamma_i(t)}{\partial t}$ & Direction of the path for the $i$-th feature at $\gamma(t)$. \\
    $\gamma_{\text{IG}}(t)$ & The straight-line path for Integrated Gradients, $\gamma_{\text{IG}}(t) = x' + t(x - x')$. \\
    $\gamma^*$ & The optimal path that maximizes an objective $\mathcal{J}(\gamma)$. \\
    $\Gamma$ & The set of all possible paths. \\
    \addlinespace
    \multicolumn{2}{l}{\textbf{Diffusion Probabilistic Models}} \\
    \addlinespace
    $\gamma^0$ & The noiseless data (a clean path). \\
    $\gamma^\tau$ & A path corrupted by noise at diffusion timestep $\tau$. \\
    $\gamma^M$ & A path completely corrupted by Gaussian noise $p(\gamma^M) = \mathcal{N}(0, I)$. \\
    $q(\gamma^\tau | \gamma^{\tau-1})$ & Forward diffusion process (adds noise). \\
    $p_\theta(\gamma^{\tau-1} | \gamma^\tau)$ & Reverse diffusion process (denoises), parameterized by $\theta$. \\
    $\mu_\theta(\gamma^\tau)$ & Mean of the reverse transition distribution. \\
    $\Sigma^\tau$ & Fixed covariance matrix at timestep $\tau$. \\
    $\boldsymbol{\epsilon}$ & Gaussian noise, $\boldsymbol{\epsilon} \sim \mathcal{N}(0, I)$. \\
    $\boldsymbol{\epsilon}_\theta(\gamma^\tau)$ & Neural network predicting noise $\boldsymbol{\epsilon}$ from $\gamma^\tau$. \\
    $p_\theta(\gamma | y = \mathcal{J}(\gamma))$ & Conditional distribution of paths given their objective score. \\
    $\mathcal{D}$ & Dataset of training paths. \\
    \addlinespace
    \multicolumn{2}{l}{\textbf{Objective Functions and Guidance}} \\
    \addlinespace
    $\mathcal{J}(\gamma)$ & Scalar objective quantifying path quality. \\
    $\mathcal{J}_\phi$ & Regression network (params $\phi$) predicting $\mathcal{J}(\gamma)$ from noisy path $\gamma^\tau$. \\
    $\tilde{p}_\theta(\gamma^0)$ & Perturbed distribution, $\tilde{p}_\theta(\gamma^0) \propto p_\theta(\gamma^0)\exp(\mathcal{J}(\gamma^0))$. \\
    $g$ & Guidance gradient, $g = \nabla_\gamma \mathcal{J}_\phi(\gamma)|_{\gamma=\mu_\theta(\gamma^\tau)}$ (combined faithfulness--complexity form in Eq.~\eqref{eq:combined_guidance}). \\
    $\ell_{\text{noise}}$ & Penalizes gradient magnitude on irrelevant features. \\
    $\ell_{\text{distance}}$ & Measures deviation from linear path. \\
    $\mathcal{E}_{\text{enc}}(\cdot)$, $\mathcal{D}_{\text{dec}}(\cdot)$ & Pretrained $\beta$-VAE encoder/decoder. \\
    $z, z'$ & Latent codes, $z = \mathcal{E}_{\text{enc}}(x)$, $z' = \mathcal{E}_{\text{enc}}(x')$. \\
    $\gamma_z(t)$ & Latent trajectory. \\
    $\mathcal{J}_{\phi_1}(\gamma)$ & Faithfulness guidance network. \\
    $\mathcal{J}_{\phi_2}(\gamma)$ & Complexity guidance network. \\
    $\text{Faithfulness}(\gamma)$ & Faithfulness score of a path. \\
    $\text{Conf}_{\text{ins}}(r), \text{Conf}_{\text{del}}(r)$ & Model confidence after insertion/deletion by ratio $r$. \\
    $\mathcal{R}$ & Set of perturbation ratios. \\
    $\mathcal{A} \in \mathbb{R}^n$ & Attribution map. \\
    $p_i$ & Normalized attribution weights, $p_i = |\mathcal{A}_i| / \sum_j |\mathcal{A}_j|$. \\
    $\text{Comp}(\gamma)$ & Entropy-based complexity score. \\
    $\varepsilon$ & Small constant for numerical stability. \\
    \addlinespace
    \multicolumn{2}{l}{\textbf{Multi-path Sampling and Aggregation}} \\
    \addlinespace
    $N$ & Number of sampled paths. \\
    $\gamma^{(k)}(t)$ & $k$-th sampled path. \\
    $\mathcal{A}(x; \gamma^{(k)})$ & Attribution map computed along $\gamma^{(k)}$. \\
    $\bar{\mathcal{A}}(x)$ & Final aggregated attribution map. \\
    $\mathcal{M}$ & Aggregation operator (e.g., mean or median). \\
    $s$ & SPI-P attribution threshold (95th percentile of mean pixel values). \\
    \bottomrule
    \end{tabularx}
    }
\end{minipage}

\clearpage

\section{Axiomatic Properties of DiffIG}\label{appx:axioms}

DiffIG computes attributions by integrating model gradients along a continuous path $\gamma(t)$ from the baseline $x'$ to the input $x$, as formalized in Eq.~\eqref{eq:path_general}. Since DiffIG adheres to this path integral formulation, it constitutes a path-based attribution method and inherits the core axiomatic properties established for this class of methods~\cite{sundararajan2017axiomatic, friedman2004paths}. We formally verify these properties below.

\begin{proposition}[Completeness]\label{prop:completeness}
	Let $f: \mathbb{R}^n \to \mathbb{R}$ be a differentiable model and let $\gamma: [0,1] \to \mathbb{R}^n$ be a piecewise-smooth path with $\gamma(0) = x'$ and $\gamma(1) = x$. Then the attributions defined by Eq.~\eqref{eq:path_general} satisfy:
	\begin{equation}
		\sum_{i=1}^{n} \mathcal{A}_i(\gamma) = f(x) - f(x').
	\end{equation}
\end{proposition}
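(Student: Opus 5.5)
The plan is to reduce completeness to the fundamental theorem of calculus applied to the scalar function $h(t) := f(\gamma(t))$ on $[0,1]$. Summing the per-feature attributions in Eq.~\eqref{eq:path_general} over $i$ and exchanging the finite sum with the integral gives
\begin{equation*}
\sum_{i=1}^{n}\mathcal{A}_i(\gamma) = \int_0^1 \sum_{i=1}^{n} \frac{\partial f(\gamma(t))}{\partial \gamma_i(t)}\,\frac{\partial \gamma_i(t)}{\partial t}\,\diff t = \int_0^1 \nabla f(\gamma(t))\cdot \dot{\gamma}(t)\,\diff t .
\end{equation*}
Since the sum is finite, the exchange is immediate once each summand is integrable.

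The next step is the multivariate chain rule. On each subinterval where $\gamma$ is smooth, $f$ is differentiable, so $h$ is differentiable there with $h'(t) = \nabla f(\gamma(t))\cdot\dot\gamma(t)$.

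Piecewise smoothness is handled by partitioning. Choose $0=t_0<t_1<\dots<t_m=1$ so that $\gamma$ is $C^1$ on each $[t_{k-1},t_k]$, using one-sided derivatives at the breakpoints. On each piece I apply the fundamental theorem of calculus to get $\int_{t_{k-1}}^{t_k} h'(t)\,\diff t = h(t_k)-h(t_{k-1})$. Summing over $k$ telescopes to $h(1)-h(0) = f(\gamma(1)) - f(\gamma(0)) = f(x)-f(x')$, using the boundary conditions $\gamma(0)=x'$ and $\gamma(1)=x$. For DiffIG these endpoint conditions are enforced at every denoising step, and in the latent setting the decoder endpoints are taken as the path endpoints.

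The main obstacle is justifying the fundamental theorem of calculus when $h'$ may fail to be continuous. I would assume $f$ is continuously differentiable along the path, which makes $h'$ continuous on each closed piece and settles the matter. If only differentiability is available, I would use the version of the fundamental theorem for an everywhere-differentiable function with Riemann/Lebesgue-integrable derivative, integrability being implicit in Eq.~\eqref{eq:path_general} being well defined.

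A remark would address ReLU networks. There $f$ is Lipschitz and differentiable almost everywhere, so $h$ is absolutely continuous as a composition of a Lipschitz function with a piecewise-$C^1$ path. The identity $h(1)-h(0)=\int_0^1 h'$ then still holds, provided the gradient used agrees with $h'$ for almost every $t$, which holds for generic paths.
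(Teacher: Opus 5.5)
Your proposal is correct and is essentially the paper's proof: sum the per-feature integrals, recognize $\nabla f(\gamma(t))^\top\dot\gamma(t)$ as $\frac{\diff}{\diff t} f(\gamma(t))$ by the chain rule, and apply the fundamental theorem of calculus. Your partition of $[0,1]$ and your integrability discussion make explicit what the paper leaves implicit. One correction to your DiffIG aside: the paper does not use the decoded endpoints as path endpoints, since that would only give $f(\mathcal{D}_{\mathrm{dec}}(z)) - f(\mathcal{D}_{\mathrm{dec}}(z'))$; it overrides them with the true $x'$ and $x$ in input space, and this is what makes the hypothesis $\gamma(0)=x'$, $\gamma(1)=x$ hold, so the proposition itself is unaffected.
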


\begin{proof}
	Aggregating the per-feature attributions yields:
	\begin{align}
		\sum_{i=1}^{n} \mathcal{A}_i(\gamma)
		 & = \int_{0}^{1} \sum_{i=1}^{n} \frac{\partial f(\gamma(t))}{\partial \gamma_i(t)} \, \frac{\diff \gamma_i(t)}{\diff t} \diff t
		= \int_{0}^{1} \nabla f(\gamma(t))^\top \dot{\gamma}(t) \, \diff t.
	\end{align}
	Recognizing the integrand as the total derivative $\frac{\diff}{\diff t} f(\gamma(t))$ via the chain rule, a direct application of the fundamental theorem of calculus gives
	$\int_{0}^{1} \frac{\diff}{\diff t} f(\gamma(t)) \, \diff t = f(\gamma(1)) - f(\gamma(0)) = f(x) - f(x')$.
\end{proof}

\begin{remark}[\textbf{Completeness under Latent-Space Path Generation}]
	\label{remark:completeness_latent}
	DiffIG constructs paths in the latent space of a pre-trained $\beta$-VAE, where a latent trajectory $\gamma_z(t)$ is decoded to the input space via $\tilde{\gamma}(t) = \mathcal{D}_\text{dec}(\gamma_z(t))$. Because the autoencoder introduces a reconstruction gap, the decoded endpoints may deviate from the true $x'$ and $x$. To guarantee strict completeness, DiffIG explicitly enforces the boundary conditions $\gamma(0) = x'$ and $\gamma(1) = x$ in the input space during path construction, while relying on the decoded latent trajectory for intermediate points $0 < t < 1$. Since the fundamental theorem of calculus depends only on the endpoint values, this strategy ensures that completeness holds exactly, irrespective of the decoder's reconstruction quality.
\end{remark}

\begin{remark}[\textbf{Completeness under Multi-path Aggregation}]
	\label{remark:completeness_multipath}
	When $N > 1$ paths are sampled and combined via the mean operator, completeness is preserved by linearity. Specifically, let $\bar{\mathcal{A}}_i = \frac{1}{N}\sum_{k=1}^{N} \mathcal{A}_i(x; \gamma^{(k)})$. Since each path independently satisfies completeness, $\sum_i \bar{\mathcal{A}}_i = \frac{1}{N}\sum_{k=1}^{N} (f(x) - f(x')) = f(x) - f(x')$. Non-linear operators such as the median do not preserve this equality in general; however, they may be preferred when robustness to outlier paths outweighs the strict completeness guarantee.
\end{remark}

\begin{proposition}[Sensitivity]
	\label{prop:sensitivity}
	If a feature $x_i$ has no influence on the model output, \ie, $\frac{\partial f(x)}{\partial x_i} = 0$ for every $x \in \mathbb{R}^n$, then $\mathcal{A}_i(\gamma) = 0$ for any admissible path $\gamma$.
\end{proposition}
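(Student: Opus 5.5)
The argument goes straight through the definition of $\mathcal{A}_i(\gamma)$ in Eq.~\eqref{eq:path_general}. The hypothesis says that $\partial f/\partial x_i$ vanishes identically on all of $\mathbb{R}^n$. In particular it vanishes at every point $\gamma(t)$ of the path, for every $t\in[0,1]$. The integrand of Eq.~\eqref{eq:path_general} is the product of this partial derivative, evaluated at $\gamma(t)$, with the path velocity $\dot\gamma_i(t)$. So the first factor is zero wherever the integrand is defined.

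Concretely, I would take an admissible path $\gamma$, meaning piecewise-smooth with $\gamma(0)=x'$ and $\gamma(1)=x$, as in Proposition~\ref{prop:completeness}. I would split $[0,1]$ into finitely many subintervals on which $\gamma$ is $C^1$. On each open subinterval $\dot\gamma_i(t)$ is finite and continuous. Since $\frac{\partial f}{\partial x_i}(\gamma(t))=0$, the integrand is identically zero there. The only points where the integrand could fail to be defined are the finitely many breakpoints, which form a null set and do not affect the integral. Summing over the subintervals gives $\mathcal{A}_i(\gamma)=0$.

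I would then note that nothing in this argument depends on how the path was produced. It applies to any path sampled by the guided diffusion model, including decoded latent paths with the endpoints enforced as in Remark~\ref{remark:completeness_latent}. It also carries over to mean aggregation, and even to median aggregation, because every individual attribution $\mathcal{A}_i(x;\gamma^{(k)})$ is zero.

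The only point needing care is the regularity bookkeeping: the velocity must be integrable and the gradient must be defined along the path. Under the paper's standing assumption that $f$ is differentiable, and with a piecewise-smooth $\gamma$, both hold. For ReLU networks, where $f$ is differentiable only almost everywhere, I would read the hypothesis as saying that the partial derivative is zero wherever it is defined, which gives the same conclusion.
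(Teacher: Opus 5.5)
Your proposal is correct and follows the paper's argument: the hypothesis makes $\partial f/\partial x_i$ vanish at every point $\gamma(t)$, so the integrand of Eq.~\eqref{eq:path_general} is identically zero and $\mathcal{A}_i(\gamma)=0$. Your piecewise-smooth bookkeeping, and your observation that the conclusion survives mean and median aggregation because every per-path attribution is zero, are sound additions that the paper's two-line proof leaves implicit.
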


\begin{proof}
	By hypothesis, the gradient component $\frac{\partial f(\gamma(t))}{\partial \gamma_i(t)}$ vanishes for all $t \in [0,1]$. The integrand in Eq.~\eqref{eq:path_general} is therefore identically zero, and the attribution reduces to $\mathcal{A}_i(\gamma) = 0$.
\end{proof}

\begin{proposition}[Implementation Invariance]
	\label{prop:impl_invariance}
	Let $f_1$ and $f_2$ be two functionally equivalent networks, \ie, $f_1(x) = f_2(x)$ for all $x \in \mathbb{R}^n$. Then, for any path $\gamma$, the attributions produced by DiffIG coincide: $\mathcal{A}_i^{f_1}(\gamma) = \mathcal{A}_i^{f_2}(\gamma)$ for every feature $i$.
\end{proposition}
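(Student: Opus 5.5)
The plan is to show that, once the path $\gamma$ is fixed, the attribution in Eq.~\eqref{eq:path_general} depends on the network only through its gradient field $\nabla f$ evaluated along $\gamma$. That gradient field is in turn fixed by the input--output map, so it cannot distinguish two implementations of the same function.

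\textbf{Step 1: equal functions have equal gradients.} Assume $f_1(x)=f_2(x)$ for all $x\in\mathbb{R}^n$ and both are differentiable. Then $h:=f_1-f_2\equiv 0$. For every point $y$ and feature $i$,
\begin{equation}
\frac{\partial f_1(y)}{\partial y_i}-\frac{\partial f_2(y)}{\partial y_i}=\lim_{s\to 0}\frac{h(y+se_i)-h(y)}{s}=0 .
\end{equation}
Hence $\nabla f_1(y)=\nabla f_2(y)$ everywhere. In particular this holds at every $y=\gamma(t)$ on the path.

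\textbf{Step 2: the integrands agree.} For a fixed piecewise-smooth $\gamma$, the velocity term $\frac{\partial\gamma_i(t)}{\partial t}$ does not involve $f$. Therefore the integrands $\frac{\partial f_1(\gamma(t))}{\partial\gamma_i(t)}\frac{\partial\gamma_i(t)}{\partial t}$ and $\frac{\partial f_2(\gamma(t))}{\partial\gamma_i(t)}\frac{\partial\gamma_i(t)}{\partial t}$ agree for every $t$ at which they are defined. Integrating over $[0,1]$ gives $\mathcal{A}_i^{f_1}(\gamma)=\mathcal{A}_i^{f_2}(\gamma)$ for every $i$. The same conclusion carries over to any aggregation $\mathcal{M}$ applied to a common set of paths, since $\mathcal{M}$ only sees the per-path attributions.

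\textbf{Main obstacle: where the path comes from.} The delicate point is the phrase ``for any path $\gamma$.'' In DiffIG the path is sampled with guidance networks $\mathcal{J}_{\phi_1},\mathcal{J}_{\phi_2}$, which are trained on scores computed from $f$. I would argue this causes no problem on two grounds.

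\emph{(a)} The statement quantifies over a fixed $\gamma$, so the sampling step is not part of the claim.

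\emph{(b)} Even if one wants the full pipeline to be invariant, the faithfulness and complexity scores of Eqs.~\eqref{eq:faithfulness} and~\eqref{eq:complexity} are built from model confidences and path attributions. Both are functionals of the input--output behavior alone: confidences trivially so, and path attributions by Steps 1--2. So under identical training randomness and sampling noise, the induced path distributions coincide.

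I would state (b) as a remark rather than prove it in full generality. For ReLU networks that are differentiable only almost everywhere, I would add that Step 1 still applies wherever both networks are differentiable. Any framework-specific choice of subgradient at kinks lies outside the hypothesis that $f$ is differentiable.
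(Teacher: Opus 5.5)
Your proposal is correct and follows essentially the same argument as the paper. Functional equivalence forces $\nabla f_1 = \nabla f_2$ wherever the gradients exist, and the path integrand depends on the model only through this gradient along $\gamma$ (the velocity $\dot{\gamma}(t)$ is model-independent), so the attributions coincide; your difference-quotient justification and your remarks on aggregation and on the guided sampling go a little beyond what the paper writes but do not change that core route.
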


\begin{proof}
	Functional equivalence entails $\nabla f_1(x) = \nabla f_2(x)$ for all $x$ at which both gradients exist. The attribution in Eq.~\eqref{eq:path_general} is determined entirely by the gradient field of $f$ evaluated along $\gamma$ and the path velocity $\dot{\gamma}(t)$. Because both quantities are shared between $f_1$ and $f_2$ for any common path $\gamma$, the resulting integrals are identical.
\end{proof}

\section{Experimental Setup}\label{appx:experiment_setup}

In this section, we describe the experimental setup used to evaluate our proposed method. We outline the datasets, models, baseline methods, and evaluation metrics.

\subsection{Datasets and Models}

We conduct our experiments on two standard image classification datasets: Oxford-IIIT Pet~\cite{parkhi2012cats} and a modified version of Mini-ImageNet~\cite{sun2019meta}.

\begin{itemize} 
    \item \textbf{Oxford-IIIT Pet.} This dataset consists of 7,390 images across 37 pet breed categories. We create a validation set by holding out 5\% of the images (370 images) and use the remaining 7,020 images for training.

    \item \textbf{Mini-ImageNet.} This dataset is a subset of ImageNet2012~\cite{deng2009imagenet}, originally curated by~\citet{sun2019meta} for few-shot learning. The standard few-shot split contains disjoint classes for training, validation, and testing. As our focus is on standard classification explanation, we modify this setup. We utilize the WordNet IDs of all 100 available classes and create a comprehensive split where all 100 classes are present in both training and validation sets. This results in 127,090 training images and 6,689 validation images. \edit{For the quantitative evaluation in Table~\ref{tab:quant}, we sample 500 images from these 6,689 validation images using seed 0, while the Oxford-IIIT Pet validation set (370 images) is evaluated in full.}
\end{itemize}

For both datasets, all images are resized to 256$\times$256 and normalized using the standard ImageNet mean [0.485, 0.456, 0.406] and standard deviation [0.229, 0.224, 0.225].

We evaluate explanations using three widely adopted architectures: \textbf{VGG16}~\cite{simonyan2015very}, \textbf{\edit{Inception V1}}~\cite{szegedy2015going}, and \textbf{ResNet18}~\cite{he2016deep}. All models are pretrained on ImageNet and further finetuned on each dataset.

\subsection{Baselines} 

We compare DiffIG against a comprehensive suite of path-based attribution methods. While their conceptual distinctions are discussed in Section~\ref{sec:related_work}, here we describe the specific implementations and hyperparameters used for reproducibility.

For fair comparison, we adopt a zero-value (black image) baseline for all methods and define the explanation objective as the model’s softmax probability for the predicted class. Hyperparameters are chosen based on official implementations or recommendations from the respective papers.

\begin{itemize} 
    \item \textbf{Integrated Gradients (IG)}\footnote{\url{https://github.com/PAIR-code/saliency/blob/master/saliency/core/integrated_gradients.py}}~\cite{sundararajan2017axiomatic}: We use 50 steps for the path integration. 
    \item \textbf{Guided IG (GIG)}\footnote{\url{https://github.com/PAIR-code/saliency/blob/master/saliency/core/guided_ig.py}}~\cite{kapishnikov2021guided}: We use 200 integration steps, a feature fraction of 0.25, and a maximum distance of 0.02. 
    \item \textbf{Manifold IG (MIG)}\footnote{\url{https://github.com/eszaher/Manifold-Integrated-Gradients}}~\cite{zaher2024manifold} and \textbf{Enhanced IG (EIG)}\footnote{We implemented EIG by adapting the conceptually analogous and publicly available code from MIG~\cite{zaher2024manifold}.}~\cite{jha2020enhanced}: Implemented with 20 integration steps. 
    \item \textbf{Adversarial Gradient Integration (AGI)}\footnote{\url{https://github.com/pd90506/AGI}}~\cite{pan2021explaining}: We use 15 steps, 15 negative classes, and a step size of 0.05. 
    \item \textbf{IG$^2$}\footnote{\url{https://github.com/JoeZhuo-ZY/IG2}}~\cite{zhuo2024ig2}: We use 201 steps and a step size of 256\edit{ (in the 0--255 pixel-intensity scale, following the official implementation)}.
    \item \textbf{Stick-breaking Path Integration (SPI)}\footnote{\url{https://openreview.net/attachment?id=muHaELT29WK&name=supplementary_material}}~\cite{jeon2023beyond}: We sample 30 paths with 30 steps each and an alpha of 10.0 for the stick-breaking process. 
    \item \textbf{DiffIG (ours)}: We use 20 integration steps.
\end{itemize} 
All methods were implemented using their official public repositories where available; otherwise, we implemented them faithfully according to their published descriptions.

\subsection{Evaluation Metrics}

To quantitatively evaluate the quality of the generated attribution maps, we use perturbation-based metrics that measure how the model output changes when pixels identified as important are added or removed. We employ three complementary metrics:

\begin{itemize} 
\item \textbf{DiffID}: As introduced in Section~\ref{subsec:experiment_setup} and by~\citet{yang2022re,kasmi2025one}, this metric measures the difference between the Area Under the Curve (AUC) of the Insertion and Deletion curves. Because Insertion increases confidence as important pixels are added and Deletion decreases confidence as the same pixels are removed, the DiffID score summarizes the overall sensitivity and selectivity of an attribution map. Higher values indicate more discriminative and informative attributions.
\item \textbf{Deletion}: Introduced by~\citet{petsiuk2018rise}, this metric measures how quickly the model confidence decreases as the most attribution-relevant pixels are progressively removed (e.g., set to zero). An effective attribution map identifies pixels whose removal leads to a rapid reduction in confidence. We report the AUC of the confidence curve, where a lower AUC indicates a more informative attribution.
\item \textbf{Insertion}: This metric begins from a zero (black) baseline image and progressively inserts the pixels deemed most important. An effective attribution map highlights features whose addition produces a rapid increase in model confidence. We report the AUC of this curve, where a higher AUC corresponds to more informative attributions.
\end{itemize}

\section{Multi-path Aggregation Strategies}
\label{appx:aggregation_methods}

DiffIG can sample $N$ distinct path candidates $\{ \gamma^{(k)} \}_{k=1}^N$ from the guidance-conditioned distribution. For each path $\gamma^{(k)}$, we compute a corresponding attribution map $\mathcal{A}^{(k)} = \mathcal{A}(x; \gamma^{(k)})$. These $N$ maps are then combined into a single, more robust attribution map $\bar{\mathcal{A}}$ using an aggregation operator $\mathcal{M}$. We detail the primary operators evaluated below, where $i$ indexes a specific pixel or feature.

\paragraph{Mean}
This is the straightforward pixel-wise average across all $N$ attribution maps. This operator provides a stable estimate but can be sensitive to outlier paths that contribute noisy attributions.
$$
\bar{\mathcal{A}}_i = \frac{1}{N} \sum_{k=1}^N \mathcal{A}^{(k)}_i
$$

\paragraph{Median}
This operator takes the pixel-wise median value across the $N$ maps. This approach is inherently robust to outliers, as it effectively ignores attribution values from a minority of highly non-optimal or noisy paths.
$$
\bar{\mathcal{A}}_i = \text{median}\left(\left\{ \mathcal{A}^{(k)}_i \right\}_{k=1}^N\right)
$$

\paragraph{Variance-Weighted Mean}
This method assigns a weight to each path's attribution map, where the weight is inversely proportional to the variance of the map itself. This strategy gives more influence to \textit{stable} paths that produce less noisy or more concentrated attributions. Let $w^{(k)} = 1 / (\text{Var}(\mathcal{A}^{(k)}) + \epsilon)$, where $\text{Var}(\cdot)$ is the variance across all pixels in a single attribution map. The final attribution is the weighted average:
$$
\bar{\mathcal{A}} = \frac{\sum_{k=1}^N w^{(k)} \mathcal{A}^{(k)}}{\sum_{k=1}^N w^{(k)}}
$$

\paragraph{SPI-P}
SPI-P operator~\cite{jeon2023beyond} is a probabilistic aggregation strategy. It first computes the pixel-wise mean $\mu_i$ and standard deviation $\sigma_i$ across all $N$ attribution maps. Then, a global attribution threshold $s$ is determined by taking the 95th percentile of all mean pixel values $\{ \mu_i \}$. The final attribution for each pixel $i$, $\bar{\mathcal{A}}_i$, is defined as the \textit{probability} that the attribution at that pixel exceeds the threshold $s$, assuming a Gaussian distribution defined by $\mu_i$ and $\sigma_i$:
$$
\bar{\mathcal{A}}_i = P(\mathcal{A}_i > s) = 1 - \Phi\left(\frac{s - \mu_i}{\sigma_i + \varepsilon}\right)
$$
where $\Phi$ is the standard Gaussian Cumulative Distribution Function (CDF) and $\varepsilon$ is a small constant for numerical stability.

\section{Comparison to Diffusion-Based Attribution}

\label{appx:ddpath_rebuttal}
As no official implementation of DDPath~\cite{lei2024denoising} is publicly available, we evaluated it using our own re-implementation. For fair comparison, both methods use a black baseline and a path length of 20. 
Unlike DDPath, which relies on the vanilla reverse denoising trajectory as its attribution path, DiffIG formulates the path as a guided stochastic process. By applying inference-time guidance, our method steers the trajectories toward regions that maximize faithfulness to the underlying model's decision, rather than strictly following the generative prior. 
Table~\ref{tab:ddpath_rebuttal} shows that DiffIG consistently improves DiffID and Insertion across all three architectures.

\begin{table}[b!]
\caption{\textbf{Comparison with DDPath on Mini-ImageNet.}}
\label{tab:ddpath_rebuttal}
\centering
\begin{tabular}{lccccccccc}
\toprule
\multirow{3}{*}{\textbf{Method}} & \multicolumn{3}{c}{\textbf{ResNet18}} & \multicolumn{3}{c}{\textbf{VGG16}} & \multicolumn{3}{c}{\textbf{\edit{Inception V1}}} \\
\cmidrule(r){2-4} \cmidrule(lr){5-7} \cmidrule(l){8-10}
 & \textbf{DiffID} & \textbf{Ins} & \textbf{Del} & \textbf{DiffID} & \textbf{Ins} & \textbf{Del} & \textbf{DiffID} & \textbf{Ins} & \textbf{Del} \\
\midrule
\textbf{DDPath}~\cite{lei2024denoising} & 0.2582 & 0.4818 & \textbf{0.2236} & 0.3179 & 0.4703 & \textbf{0.1524} & 0.2876 & 0.5096 & \textbf{0.2220} \\
\textbf{DiffIG} & \textbf{0.3126} & \textbf{0.5747} & 0.2621 & \textbf{0.4298} & \textbf{0.6037} & 0.1739 & \textbf{0.3578} & \textbf{0.6285} & 0.2707 \\
\bottomrule
\end{tabular}
% }
\end{table}

\begin{figure}[b!]
    \centering
    \includegraphics[width=\linewidth]{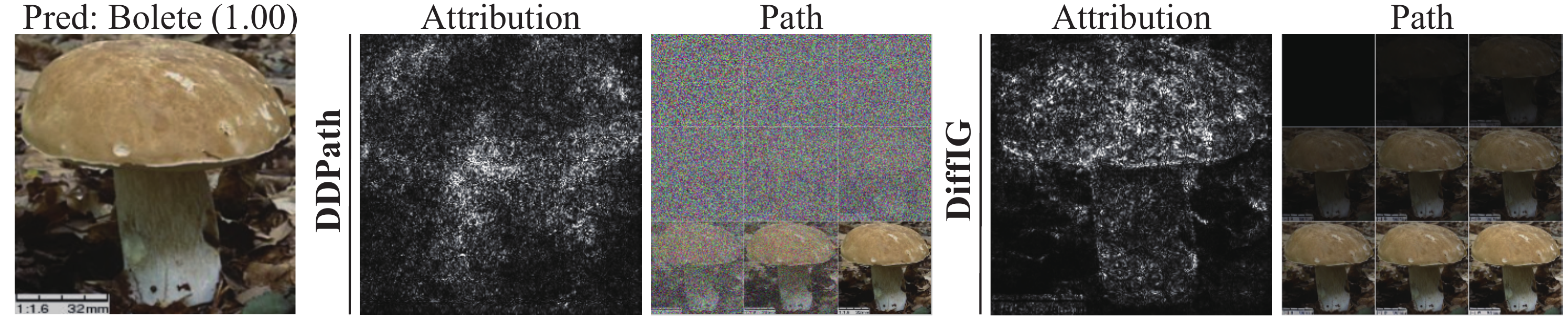}
    \caption{\textbf{Qualitative comparison with DDPath (Mini-ImageNet, ResNet18).} DiffIG produces less noisy attribution maps.}
    \label{fig:ddpath_rebuttal}
\end{figure}

\section{Generalization to Vision Transformer-Based Classifier}
\label{appx:vit_rebuttal}
To test architectural generalization beyond CNNs, we evaluate ViT-B/16~\cite{dosovitskiy2021an} on \edit{Oxford-IIIT Pet}, using the same black baseline and perturbation-based faithfulness protocol. 
As shown in Table~\ref{tab:vit_rebuttal}, DiffIG outperforms all baselines on DiffID, Insertion, and Deletion, indicating that its gains generalize to transformer backbones.

\begin{table}[t!]
    % \scriptsize
    \setlength{\tabcolsep}{10pt}
    \caption{\textbf{Faithfulness scores on ViT-B/16~\cite{dosovitskiy2021an} using the \edit{Oxford-IIIT Pet} dataset.}}
    \label{tab:vit_rebuttal}
    \centering
    \begin{tabular}{l ccc}
        \toprule
        \textbf{Method} & \textbf{DiffID ($\uparrow$)} & \textbf{Insertion ($\uparrow$)} & \textbf{Deletion ($\downarrow$)} \\
        \midrule
        IG~\cite{sundararajan2017axiomatic} & 0.3303 & 0.6775 & 0.3472 \\
        GIG~\cite{kapishnikov2021guided} & 0.3319 & 0.6781 & 0.3462 \\
        IG$^2$~\cite{zhuo2024ig2} & 0.2030 & 0.6010 & 0.3980 \\
        AGI~\cite{pan2021explaining} & 0.4033 & 0.7290 & 0.3257 \\
        EIG~\cite{jha2020enhanced} & 0.3316 & 0.6688 & 0.3373 \\
        MIG~\cite{zaher2024manifold} & 0.3356 & 0.6751 & 0.3395 \\
        SPI~\cite{jeon2023beyond} & 0.1980 & 0.6105 & 0.4125 \\
        \midrule
        \textbf{DiffIG} & \textbf{0.5711} & \textbf{0.7878} & \textbf{0.2166} \\
        \bottomrule
    \end{tabular}
\end{table}

\section{Computational Complexity}
\label{appx:runtime_rebuttal}
We compare against IG, EIG, GIG, AGI, MIG, IG$^2$, and SPI using the same perturbation-metric pipeline. Inference timing is measured on the NVIDIA H100 80GB GPU and averaged over 10 samples. 
Table~\ref{tab:runtime_rebuttal} reports throughput and faithfulness together. 
DiffIG is slower than single-step methods such as IG, but remains competitive with optimization-heavy baselines while achieving stronger faithfulness. 
As the number of sampled paths $N$ increases, faithfulness improves, albeit at an additional runtime cost; we also observe a practical trade-off around 20 steps.
\edit{In terms of memory, the peak GPU usage (allocated/reserved, batch size~1) is $7.0/9.6$~GB for DiffIG$_{N=1}$ and $35.2/59.5$~GB for DiffIG$_{N=10}$, compared with $0.24$~GB for single-pass gradient methods such as IG, GIG, and AGI; the multi-path footprint scales with $N$ and can be reduced by sampling paths sequentially.}
For training, an \edit{Oxford-IIIT Pet}/ResNet18 run required 673.44 PFLOPs for the diffusion model and 171.32 PFLOPs for guidance predictors (0.9 GPU hours on one NVIDIA H100)\edit{, a one-time cost per (dataset, model) pair}.

\begin{table}[t!]
    \setlength{\tabcolsep}{4pt}
    \caption{\textbf{Runtime analysis on \edit{Oxford-IIIT Pet} with ResNet18.} Throughput is averaged over 10 samples (mean $\pm$ std).}
    \label{tab:runtime_rebuttal}
    \centering
    \begin{tabular}{l ccc c}
        \toprule
        \textbf{Method} & \textbf{DiffID ($\uparrow$)} & \textbf{Insertion ($\uparrow$)} & \textbf{Deletion ($\downarrow$)} & \textbf{Seconds / Image} \\
        \midrule
        IG~\cite{sundararajan2017axiomatic} & 0.3213 & 0.4892 & 0.1679 & \textbf{0.0894} $\pm$ 0.0034 \\
        GIG~\cite{kapishnikov2021guided} & 0.3486 & 0.5065 & 0.1579 & 0.1448 $\pm$ 0.0054 \\
        IG$^2$~\cite{zhuo2024ig2} & 0.2767 & 0.4385 & 0.1619 & 2.7705 $\pm$ 0.2893 \\
        AGI~\cite{pan2021explaining} & 0.3242 & 0.4879 & 0.1637 & 0.2941 $\pm$ 0.0312 \\
        EIG~\cite{jha2020enhanced} & 0.2680 & 0.4532 & 0.1852 & 0.0959 $\pm$ 0.0017 \\
        MIG~\cite{zaher2024manifold} & 0.2552 & 0.4316 & 0.1764 & 1.9328 $\pm$ 0.8579 \\
        SPI~\cite{jeon2023beyond} & 0.2738 & 0.4414 & 0.1676 & 21.5796 $\pm$ 0.6048 \\
        \midrule
        \textbf{DiffIG}$_{N=1}$ & 0.3791 & 0.5201 & 0.1410 & 1.1386 $\pm$ 0.0602 \\
        \textbf{DiffIG}$_{N=10}$ & 0.4898 & 0.6143 & 0.1245 & 1.6492 $\pm$ 0.1374 \\
        \textbf{DiffIG}$_{N=30}$ & 0.4940 & 0.6222 & 0.1254 & 2.9701 $\pm$ 0.0431 \\
        \textbf{DiffIG}$_{N=50}$ & \textbf{0.5067} & \textbf{0.6300} & \textbf{0.1233} & 4.5301 $\pm$ 0.0612 \\
        \midrule
        \textbf{DiffIG}$_{\text{step}=10}$ & 0.4889 & 0.6124 & 0.1235 & 3.0531 $\pm$ 0.0621 \\
        \textbf{DiffIG}$_{\text{step}=20}$ & \textbf{0.5067} & \textbf{0.6300} & 0.1233 & 4.5301 $\pm$ 0.0612 \\
        \textbf{DiffIG}$_{\text{step}=40}$ & 0.5011 & 0.6223 & \textbf{0.1212} & 7.6183 $\pm$ 0.2025 \\
        \bottomrule
    \end{tabular}
\end{table}

\section{Robustness to Generative Priors and VAE Backbones}
\label{appx:vae_rebuttal}
We evaluate VAE backbones from MAR, Stable Diffusion 2.1~\cite{rombach2022high} (SD), and Kandinsky 2.1~\cite{razzhigaev2023kandinsky} (KD). 
Table~\ref{tab:vae_rebuttal} shows that DiffIG remains competitive across all choices, indicating robustness to the VAE backbone rather than dependence on a single encoder.

Importantly, models such as SD and KD are pre-trained on general large-scale datasets (e.g., LAION~\citep{schuhmann2022laion}). Applying these models to specific target domains, such as the \edit{Oxford-IIIT Pet} dataset, can introduce out-of-distribution (OOD) challenges. Methods that rely strictly on the learned data manifold of the diffusion model (e.g., DDPath) may be more susceptible to performance degradation in such OOD scenarios. In contrast, our approach appears to mitigate this issue. Because DiffIG actively steers the attribution trajectory using a guidance classifier rather than passively following the unconditional diffusion path, it is less constrained by the training distribution of the pre-trained diffusion model, successfully maintaining high attribution quality even across domain shifts.

\begin{table}[t!]
    \caption{\textbf{Sensitivity analysis on VAE backbone (\edit{Oxford-IIIT Pet}).}}
    \label{tab:vae_rebuttal}
    \centering
    \begin{tabular}{lccccccccc}
    \toprule
    \multirow{3}{*}{\textbf{Method}} & \multicolumn{3}{c}{\textbf{ResNet18}} & \multicolumn{3}{c}{\textbf{VGG16}} & \multicolumn{3}{c}{\textbf{\edit{Inception V1}}} \\
    \cmidrule(r){2-4} \cmidrule(lr){5-7} \cmidrule(l){8-10}
     & \textbf{DiffID} & \textbf{Ins} & \textbf{Del} & \textbf{DiffID} & \textbf{Ins} & \textbf{Del} & \textbf{DiffID} & \textbf{Ins} & \textbf{Del} \\
    \midrule
    \textbf{DiffIG$_{\text{MAR}}$}  & \textbf{0.5067} & \textbf{0.6300} & \textbf{0.1233} & 0.6368 & 0.7128 & \textbf{0.0760} & \textbf{0.4817} & 0.6112 & \textbf{0.1296} \\
    \textbf{DiffIG$_{\text{SD}}$}   & 0.4829 & 0.6094 & 0.1265 & \textbf{0.6772} & \textbf{0.7936} & 0.1164 & 0.4718 & 0.6100 & 0.1381 \\
    \textbf{DiffIG$_{\text{KD}}$}   & 0.4858 & 0.6098 & 0.1240 & 0.6725 & 0.7899 & 0.1174 & 0.4801 & \textbf{0.6176} & 0.1376 \\
    \bottomrule
    \end{tabular}
\end{table}

\section{\edit{Path Dependence and Sensitivity to the SBP Concentration}}
\label{appx:path_alpha}
\edit{A central premise of DiffIG is that the integration path itself, not only the baseline input, materially changes the resulting attribution. Figure~\ref{fig:path_alpha} makes this explicit. On Oxford-IIIT Pet with ResNet18 (100 validation images), we draw Stick-Breaking Process (SBP) paths and evaluate the single-path DiffID for several concentration parameters $\alpha$. The right panel shows that four random SBP paths drawn for the \emph{same} input yield visibly different attribution maps, confirming that the path is a genuine source of variation and motivating a learned, controllable path generator rather than a single hand-crafted path.}

\edit{At the same time, DiffIG is robust to the specific $\alpha$ used to generate its synthetic training paths. The left panel reports the distribution of single-path DiffID across 100 SBP paths for each $\alpha \in \{1, 5, 10, 20\}$; for every $\alpha$ the per-$\alpha$ median stays above the linear-path baseline. Because the synthetic training set is generated by sampling $\alpha$ over a range (Table~\ref{tab:hyperparams_diffig}), this indicates that the final attribution quality is not sensitive to the diversity or precise range of the SBP training distribution.}

\begin{figure}[t!]
    \centering
    \includegraphics[width=\linewidth]{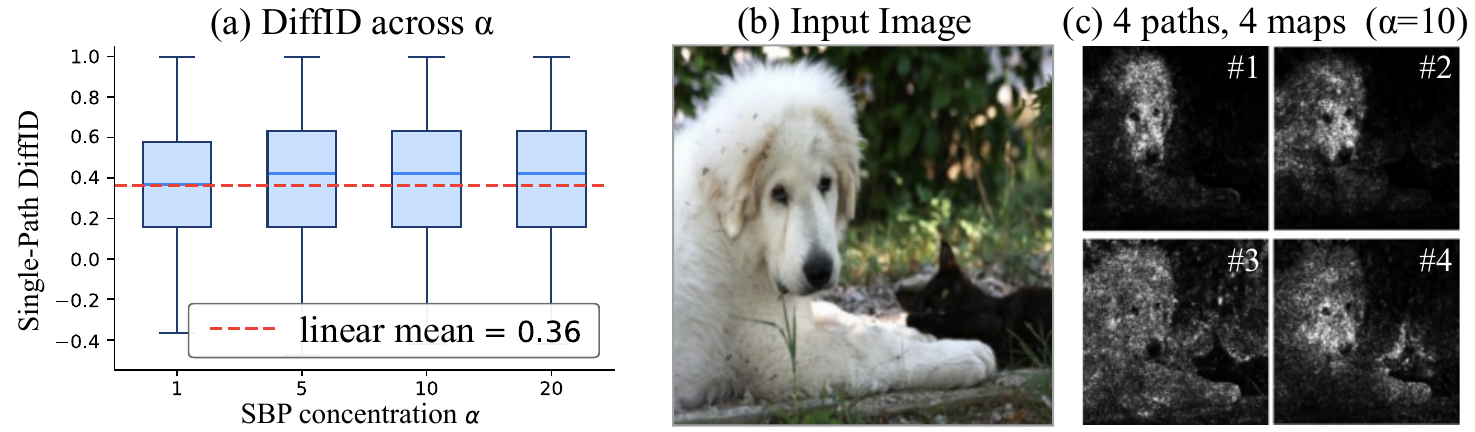}
    \caption{\edit{\textbf{Path dependence and robustness to the SBP concentration $\alpha$} (Oxford-IIIT Pet on ResNet18, 100 validation images). \emph{Left:} single-path DiffID over 100 SBP paths per $\alpha$; the median is above the linear-path baseline for every $\alpha \in \{1, 5, 10, 20\}$. \emph{Right:} four random SBP paths at $\alpha{=}10$ on the same input produce visibly different attributions.}}
    \label{fig:path_alpha}
\end{figure}

\section{\edit{Robustness to the Baseline Choice}}
\label{appx:baseline_choice}
\edit{DiffIG uses a fixed black baseline by default, which (as noted in our Limitations) can under-attribute dark image regions. Here we show that DiffIG is robust to this choice and that a simple remedy mitigates the dark-region issue. We re-run DiffIG with black, white, and gray baseline inputs, and with a black--white mean (BW-mean) attribution~\cite{kapishnikov2019xrai}, $\mathrm{Attr}_{\mathrm{BW}}(x) = \tfrac{1}{2}\bigl(\mathrm{Attr}^{\mathrm{black}}(x) + \mathrm{Attr}^{\mathrm{white}}(x)\bigr)$, a convex combination of two valid DiffIG attributions for the same prediction target that preserves the per-baseline axiomatic properties by construction. No retraining or hyperparameter changes are made; only the baseline choice changes.}

\edit{As shown in Table~\ref{tab:baseline_choice}, DiffIG remains above every path-based competitor (\cf~Table~\ref{tab:quant}) for all six (dataset, model) pairs under all baselines. Averaging the black and white attributions covers each other's failure mode: BW-mean exceeds the best \emph{single} baseline in five of six settings (up to $+0.115$ DiffID), directly addressing the dark-region failure noted in our Conclusion. The lone regression (on Oxford-IIIT Pet with VGG16, $-0.008$) is driven by Deletion alone: its Insertion AUC in fact \emph{improves} from $0.758$ to $0.784$ while Deletion AUC rises from $0.100$ to $0.133$.}

\begin{table}[t!]
    \caption{\edit{\textbf{The DiffID of DiffIG's under alternative  baselines.} BW-mean denotes the mean of the black- and white-baseline attributions; \textbf{bold} marks the best value in each row; $\Delta$ is BW-mean minus the best single baseline.}}
    
    \label{tab:baseline_choice}
    \centering
    \setlength{\tabcolsep}{4pt}
    \begin{tabular}{ll rrrr r}
        \toprule
        \textbf{Dataset} & \textbf{Model} & \textbf{Black} & \textbf{White} & \textbf{gray} & \textbf{BW-mean} & $\boldsymbol{\Delta}$ \\
        \midrule
        Mini-ImageNet & ResNet18    & 0.309 & 0.131 & 0.269 & \textbf{0.311} & $+0.002$ \\
        Mini-ImageNet & VGG16       & 0.410 & 0.266 & 0.322 & \textbf{0.468} & $+0.059$ \\
        Mini-ImageNet & \edit{Inception V1} & 0.353 & 0.269 & 0.342 & \textbf{0.467} & $+0.115$ \\
        Oxford-IIIT Pet     & ResNet18    & 0.492 & 0.345 & 0.457 & \textbf{0.560} & $+0.068$ \\
        Oxford-IIIT Pet     & VGG16       & \textbf{0.659} & 0.517 & 0.553 & 0.651 & $-0.008$ \\
        Oxford-IIIT Pet     & \edit{Inception V1} & 0.484 & 0.366 & 0.528 & \textbf{0.613} & $+0.085$ \\
        \bottomrule
    \end{tabular}
\end{table}

\section{\edit{Validity Under the Latent--Pixel Domain Gap}}
\label{appx:latent_pixel}
\edit{DiffIG trains and applies its guidance predictor on latent-space paths, while the final faithfulness is evaluated in pixel space (Section~``Validity of Guided Generation''). To probe whether this domain gap is harmful, we construct a controlled 3D-helix toy in which a latent DDPM is guided exactly as in DiffIG, but the guidance target is read in pixel space through the decoder. As shown in Figure~\ref{fig:latent_sweep}, increasing the guidance scale $\omega \in \{0, 5, 10, 20\}$ steadily optimizes the pixel-space objective (target-side fraction $0.53 \to 1.00$) while the mean distance to the data manifold stays at $\approx 10^{-3}$. Guidance therefore optimizes the pixel-space objective without leaving the manifold, indicating the latent--pixel gap is benign in practice.}

\begin{figure}[t!]
    \centering
    \includegraphics[width=\linewidth]{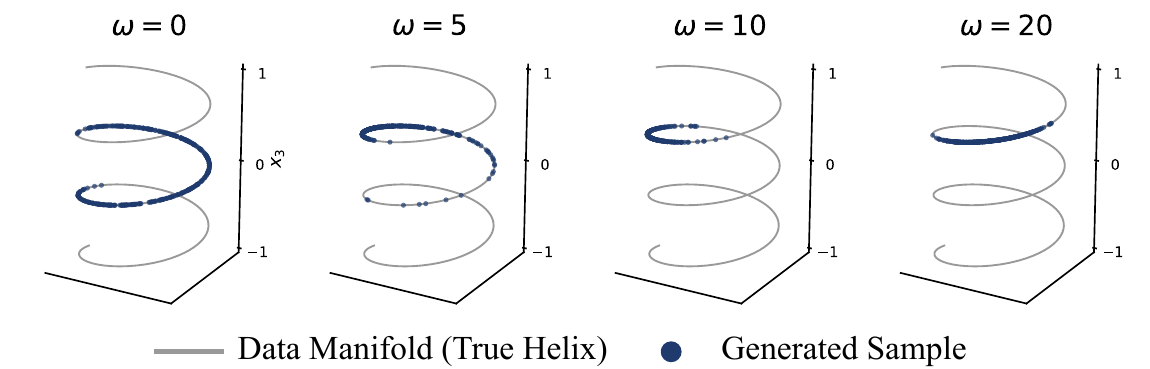}
    \caption{\edit{\textbf{Latent guidance under a pixel-space target (3D-helix toy).} As $\omega$ increases, the target-side fraction rises from $0.53$ to $1.00$ while the mean manifold distance remains $\approx 10^{-3}$, showing guidance optimizes the pixel objective without leaving the manifold.}}
    \label{fig:latent_sweep}
\end{figure}

\section{\edit{Comparison with Smoothing Attribution Methods}}
\label{appx:smoothing}

\edit{To situate DiffIG within the broader attribution literature, we test whether DiffIG's gains can be reproduced by simply smoothing IG. All experiments follow our protocol on Oxford-IIIT Pet with ResNet18.}

\edit{Reference~\cite{smilkov2017smoothgrad} and related work show that low-pass filtering can improve faithfulness scores for gradient-based maps. We therefore ask whether IG ${+}$ smoothing closes the gap. Table~\ref{tab:smoothing} shows that post-hoc Gaussian smoothing of IG monotonically reduces DiffID. SmoothGrad (SG)~\cite{smilkov2017smoothgrad} and SG-IG improve over plain IG but remain well below DiffIG$_{N=10}$. Crucially, smoothing leaves the straight-line integration path unchanged, indicating that DiffIG's gain comes from controllable path generation rather than low-pass filtering.}

\begin{table}[t!]
    \centering
    \footnotesize
    \caption{\edit{\textbf{Smoothing baselines on Oxford-IIIT Pet/ResNet18.} Smoothing ablation using 100 validation samples and SmoothGrad with $M{=}20$. \textbf{Bold} marks the best value in each block.}}
    \label{tab:smoothing}
    \setlength{\tabcolsep}{4pt}
    \begin{tabular}{lccc}
        \toprule
        \textbf{Method} & \textbf{DiffID ($\uparrow$)} & \textbf{Ins ($\uparrow$)} & \textbf{Del ($\downarrow$)} \\
        \midrule
        IG~\cite{sundararajan2017axiomatic}             & 0.363 & 0.536 & 0.173 \\
        IG ${+}$ Gauss.\ $\sigma{\in}\{1,2,4\}$         & 0.348/0.313/0.266 & 0.513/0.469/0.412 & 0.165/0.156/0.146 \\
        SG / SG-IG~\cite{smilkov2017smoothgrad}          & 0.447 / 0.390 & 0.582 / 0.552 & \textbf{0.135} / 0.162 \\
        DiffIG$_{N=1}$ / $_{N=10}$                       & 0.384 / \textbf{0.509} & 0.534 / \textbf{0.660} & 0.151 / 0.151 \\
        \bottomrule
    \end{tabular}
\end{table}

\section{\edit{Faithfulness Under an Independent Metric}}
\label{appx:quantus}
\edit{A possible concern is that DiffIG's advantage stems from optimizing the same Insertion/Deletion metric used for evaluation. It does not: the guidance predictor is a proxy regressor trained on noisy latent paths, so DiffIG never optimizes the pixel-space benchmark directly. To verify this on an independent metric, we evaluate the \emph{Faithfulness Correlation} (using absolute-value attributions) from the Quantus toolkit~\cite{hedstrom2023quantus} on Oxford-IIIT Pet/ResNet18 ($n{=}370$). DiffIG attains $-0.0025$, ahead of AGI ($-0.0042$) and GIG ($-0.0093$), confirming that the improvement transfers to a metric DiffIG was not tuned against.}

\section{\edit{Verifying the Meerkat Attribution}}
\label{appx:meerkat}
\edit{In the main paper (Figure~\ref{fig:qual}), DiffIG attributes importance to a third meerkat on the far right that other methods miss. Because attribution is model-specific (a region should be salient if and only if perturbing it changes the classifier output), we verify that this region carries genuine classifier evidence rather than DiffIG over-amplification. Using the original sample, \emph{Insertion} of the far-right ROI onto a mean-masked baseline yields a positive target-class change in both logit ($+0.346$) and probability ($+0.0094$). Conversely, \emph{Deletion} of the same ROI lowers the target logit by $1.3048/1.4003/1.3416$ under black/mean/blur in-painting, respectively. Both tests confirm the far-right region is genuine model evidence (Figure~\ref{fig:meerkat_rois}).}

\begin{figure}[t!]
    \centering
    \includegraphics[width=0.4\linewidth]{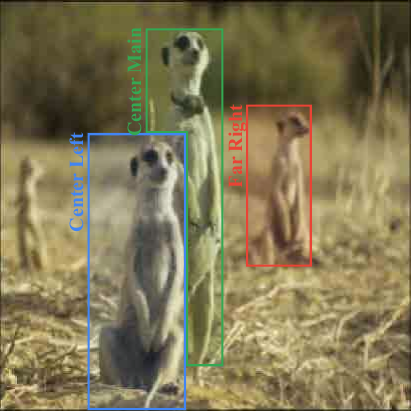}
    \caption{\edit{\textbf{Region-of-interest verification on the meerkat example.} Inserting the far-right ROI raises the target-class logit/probability, and deleting it lowers the target logit across black/mean/blur in-painting, confirming the region is genuine classifier evidence.}}
    \label{fig:meerkat_rois}
\end{figure}

\section{\edit{Per-Step Attribution Across Denoising Steps}}
\label{appx:denoising_steps}
\edit{To visualize how the attribution forms during sampling, Figure~\ref{fig:denoising_progression} shows the per-step DiffIG attribution at relative denoising progress $t/T \in \{0, 0.2, 0.4, 0.6, 0.8, 1.0\}$ for an Oxford-IIIT Pet/ResNet18 sample ($N{=}10$, median-aggregated). The maps are noise-like for $t/T \le 0.4$ and concentrate on the target object from $t/T \approx 0.6$ onward; the trajectory mirrors the dataset-level DiffID curve, which rises from $0.01$ to $0.46$ over the same interval.}

\begin{figure}[t!]
    \centering
    \includegraphics[width=\linewidth]{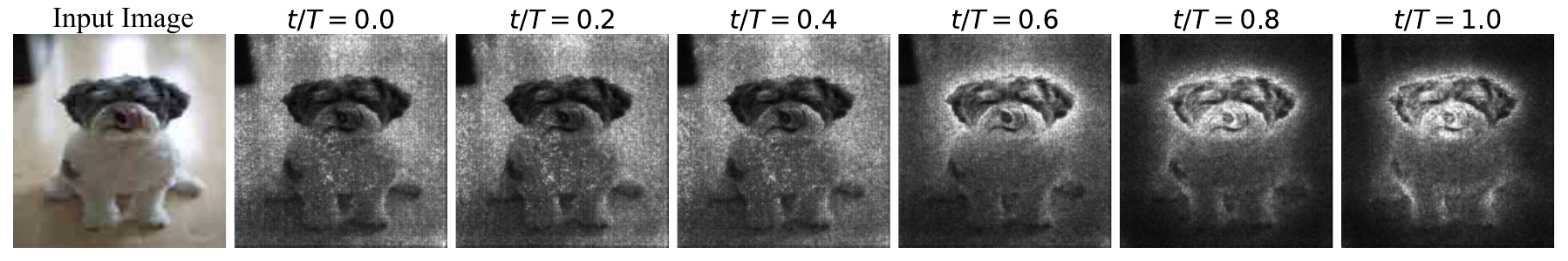}
    \caption{\edit{\textbf{Per-step DiffIG attribution across denoising progress} $t/T \in \{0, 0.2, 0.4, 0.6, 0.8, 1.0\}$ (Oxford-IIIT Pet/ResNet18, $N{=}10$ median-aggregated). Maps are noise-like for $t/T \le 0.4$ and concentrate on the target object from $t/T \approx 0.6$ onward.}}
    \label{fig:denoising_progression}
\end{figure}

\section{\edit{Failure Case Analysis}}
\label{appx:failure_cases}
\edit{We examine cases where DiffIG underperforms the strongest path-based baselines (GIG, AGI). Most such cases are not true failures but favorable joint-objective trade-offs, where DiffIG concedes a little on one objective (faithfulness or complexity) to gain substantially on the other. For instance, GIG can attain lower complexity, but by concentrating attribution mass in scattered, hot-pixel components, whereas DiffIG forms more coherent foci that recover a large DiffID margin (Figure~\ref{fig:failure_cases}). 
Genuine failures, where DiffIG is worse than both baselines on both metrics, are comparatively rare and typically arise in dark-object cases, suggesting that such inputs may be more challenging for controllable path generation.}

\begin{figure}[t!]
    \centering
    \includegraphics[width=0.55\linewidth]{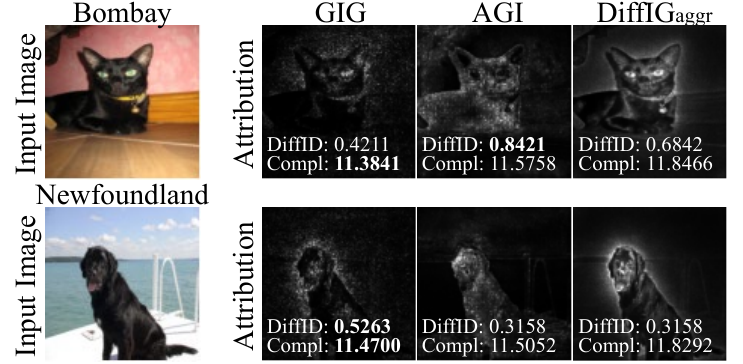}
    \caption{\edit{\textbf{Representative trade-off/failure cases (Oxford-IIIT Pet/ResNet18).} For the \emph{Bombay} and \emph{Newfoundland} examples, GIG attains lower complexity but through fragmented hot-pixel attribution (many small components), whereas DiffIG\textsubscript{aggr} recovers a large DiffID margin with more coherent foci. DiffID is higher-better and complexity is lower-better.}}
    \label{fig:failure_cases}
\end{figure}

\section{Hyperparameter Sensitivity Analysis}
\label{appx:optimal_hyperparameters}

To assess the robustness of DiffIG and understand the influence of its key design choices, we conducted a structured sensitivity analysis across \edit{six} experimental settings (Oxford-IIIT Pet vs. Mini-ImageNet; \edit{VGG16, ResNet18, and Inception V1}). Rather than aiming to exhaustively tune for the single best-performing configuration, our goal was to examine how stable DiffIG remains under different hyperparameter combinations and to identify parameters that have consistent, interpretable effects on performance. We analyze three hyperparameter groups:
\begin{itemize}
    \item \textbf{Aggregation method}: Strategies for combining $N>1$ sampled paths, including \textsf{Best}, \textsf{SPI-P}, \textsf{Median}, \textsf{Mean}, and variance-weighted mean (\textsf{V-Mean}).
    \item \textbf{Scaling factors} ($\lambda_{\text{faith}}, \lambda_{\text{comp}}$): Guidance strengths used during conditional sampling.
    \item \textbf{Number of path candidates} ($N$): The number of path candidates to sample (where $N$=1 corresponds to a single path).
\end{itemize}

We report performance using the standard perturbation metrics (DiffID, Insertion, and Deletion), allowing us to assess both sensitivity and consistency across settings.

\subsection{Ablation Study on the Number of Path Candidates}

Across all datasets, models, and aggregation methods, we observe a highly stable and monotonic trend: performance improves as the number of sampled candidates increases. Figure~\ref{fig:n_candidates_ablation} shows that $N{=}1$ consistently underperforms, while increasing the number of sampled paths to $N{=}10$ yields a substantial improvement. The gains largely saturate at $N{=}50$, which achieves the most reliable performance across all settings. This pattern indicates that DiffIG benefits from exploring multiple plausible attribution paths, and it confirms that the method is robust with respect to $N$ as long as $N$ is moderately large. 

\begin{figure}[t!]
    \centering
    \includegraphics[width=\linewidth]{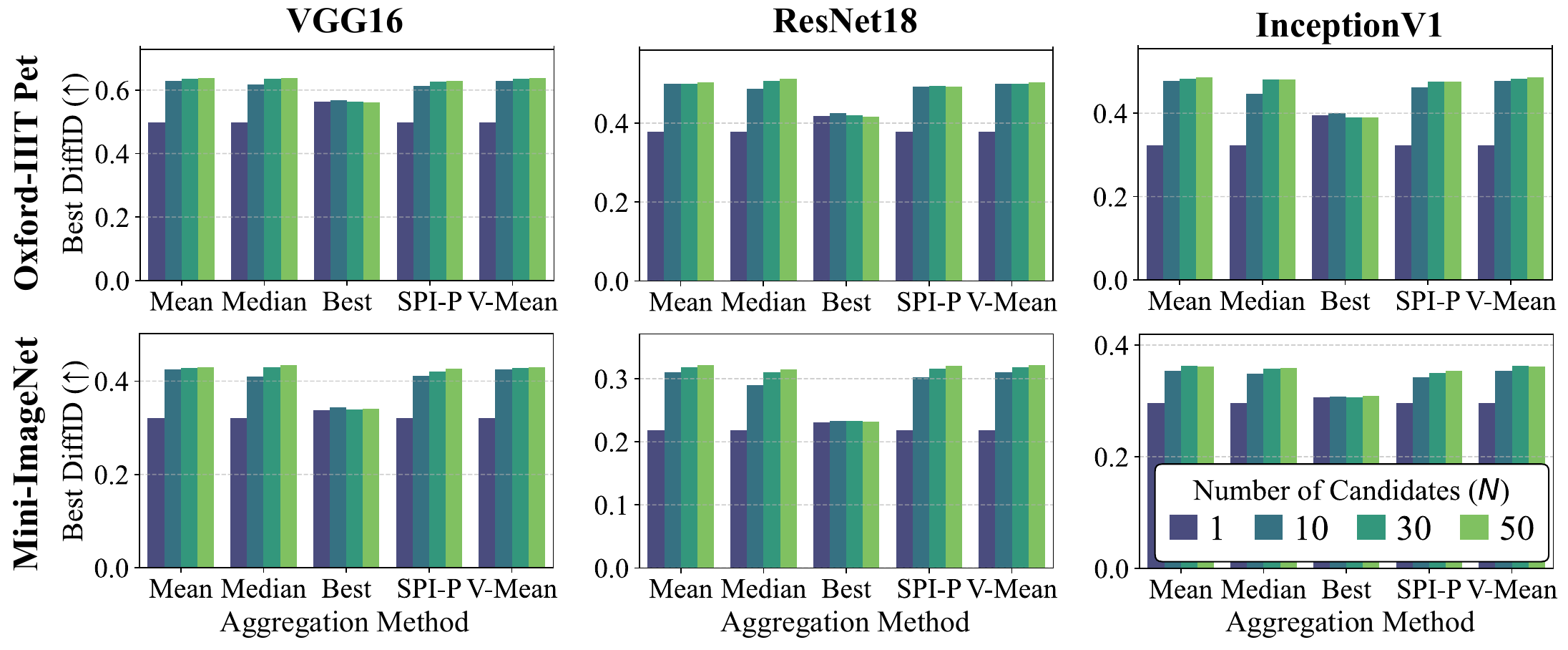}
    \caption{\textbf{DiffID score by number of path candidates ($N$) and aggregation method.} Higher scores are better. The x-axis shows the aggregation method, while bar colors denote $N$. Error bars represent the standard deviation computed across all combinations of scaling factors $\lambda_{\text{faith}}$ and $\lambda_{\text{comp}}$.}
    \label{fig:n_candidates_ablation}
\end{figure}

\begin{figure}[t!]
    \centering
    \includegraphics[width=\linewidth]{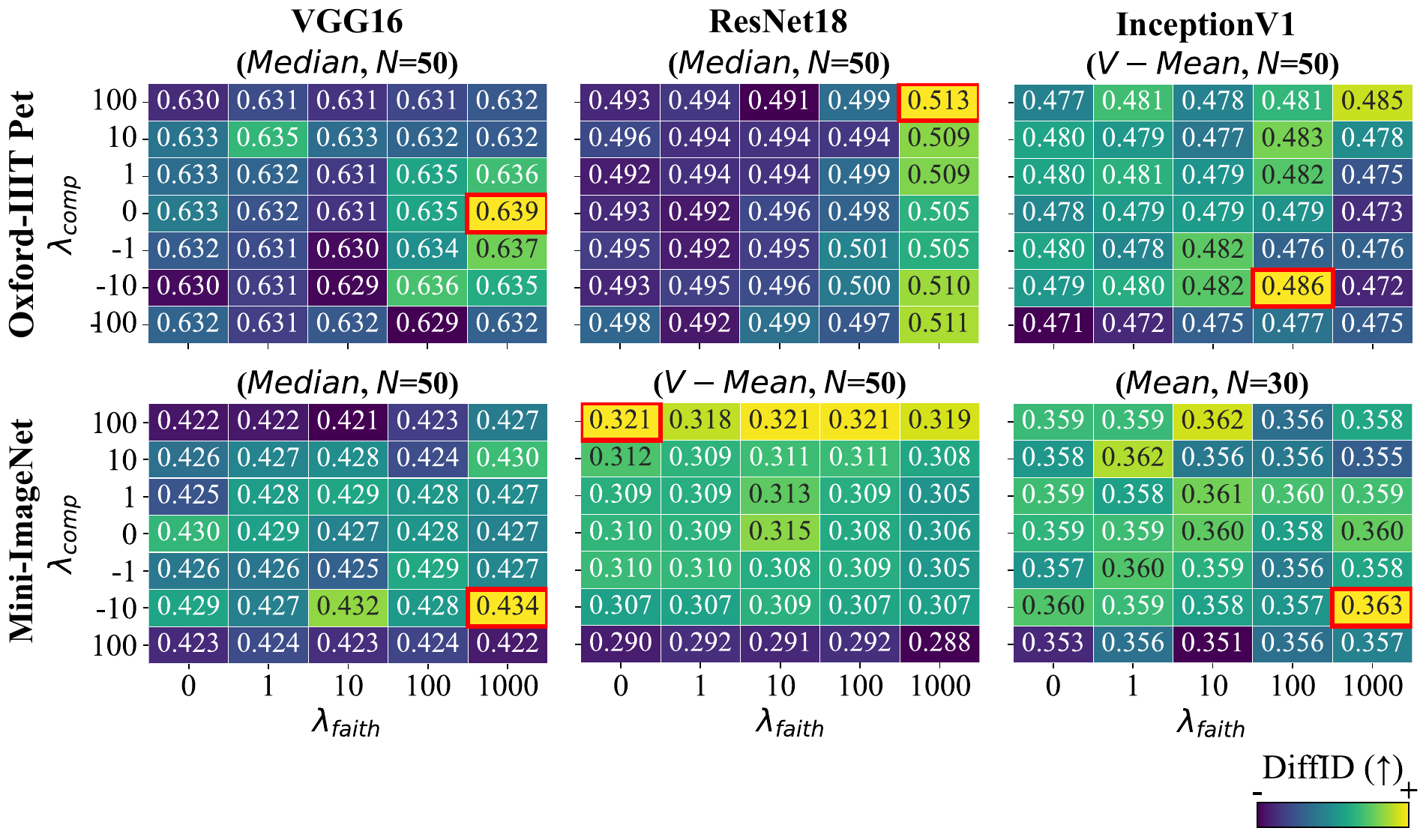}
    \caption{\textbf{Scaling factors $\lambda_{\text{faith}}$ and $\lambda_{\text{comp}}$ analysis.} The heatmap shows the DiffID score (higher is better). The aggregation method and the number of path candidates were fixed to their optimal values for each dataset and model.}
    \label{fig:w_cg_heatmap_analysis}
\end{figure}

\subsection{Ablation Study on Aggregation Method}

The choice of aggregation method proved to be highly influential, creating a clear separation in performance across all experimental setups. This is visually evident in Figures \ref{fig:ins_del_tradeoff} and \ref{fig:ins_faith_tradeoff}, which both show two distinct clusters.

The \edit{Best-of-$N$} (\textsf{Best}) method (blue dots) consistently forms an inferior cluster, characterized by high Deletion scores (undesirable) and low Insertion scores (undesirable) as seen in Figure \ref{fig:ins_del_tradeoff}. Similarly, Figure \ref{fig:ins_faith_tradeoff} shows it is isolated in a low-DiffID, low-Insertion region.

Conversely, the other four methods—\textsf{SPI-P}, \textsf{Median}, variance-weighted mean(\textsf{V-Mean}), and \textsf{Mean}—form a superior cluster of high-performing configurations. Figure~\ref{fig:ins_del_tradeoff} demonstrates that these four methods collectively define the optimal Pareto Frontier (dashed line), representing the best possible trade-off between minimizing Deletion and maximizing Insertion. \edit{Notably, despite its name, the \textsf{Best} method does not lie on this frontier, confirming that the Best-of-$N$ selection strategy underperforms the other four aggregation methods.}

Figure \ref{fig:ins_faith_tradeoff} reinforces this finding, showing these four methods tightly grouped in the desirable top-right quadrant (high DiffID, high Insertion). The zoomed-in insets further confirm that while there are minor differences—for instance, \textsf{Median} (yellow) often slightly lags \textsf{Mean} (cyan) and \textsf{V-Mean} (green)—all four methods represent a robust and desirable trade-off, far superior to the \textsf{Best} method.

\begin{figure}[t!]
    \centering
    \includegraphics[width=\linewidth]{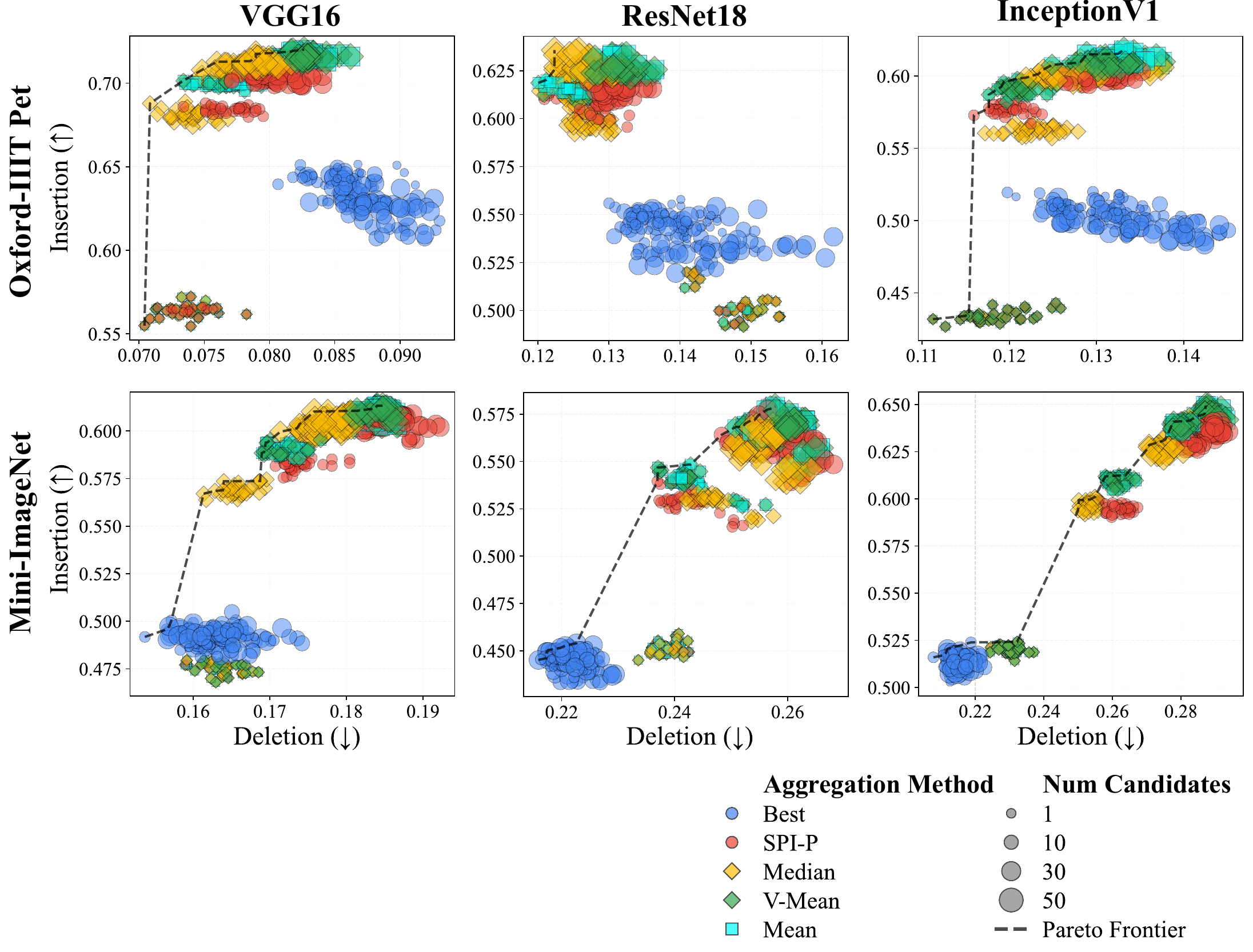}
    \caption{\textbf{Hyperparameter analysis for aggregation method and number of path candidates on the Deletion ($\downarrow$) vs. Insertion ($\uparrow$) metrics.} The plot clearly isolates the sub-optimal {\color{RoyalBlue}\textsf{Best}} method (blue). The four other methods ({\color{red}\textsf{SPI-P}}, {\color{orange}\textsf{Median}}, {\color{Green}\textsf{V-Mean}}, {\color{Cerulean}\textsf{Mean}}) form the superior performance cluster, with their optimal trade-offs illustrated by the Pareto Frontier (dashed line).}
    \label{fig:ins_del_tradeoff}
\end{figure}

\begin{figure}[t!]
    \centering
    \includegraphics[width=\linewidth]{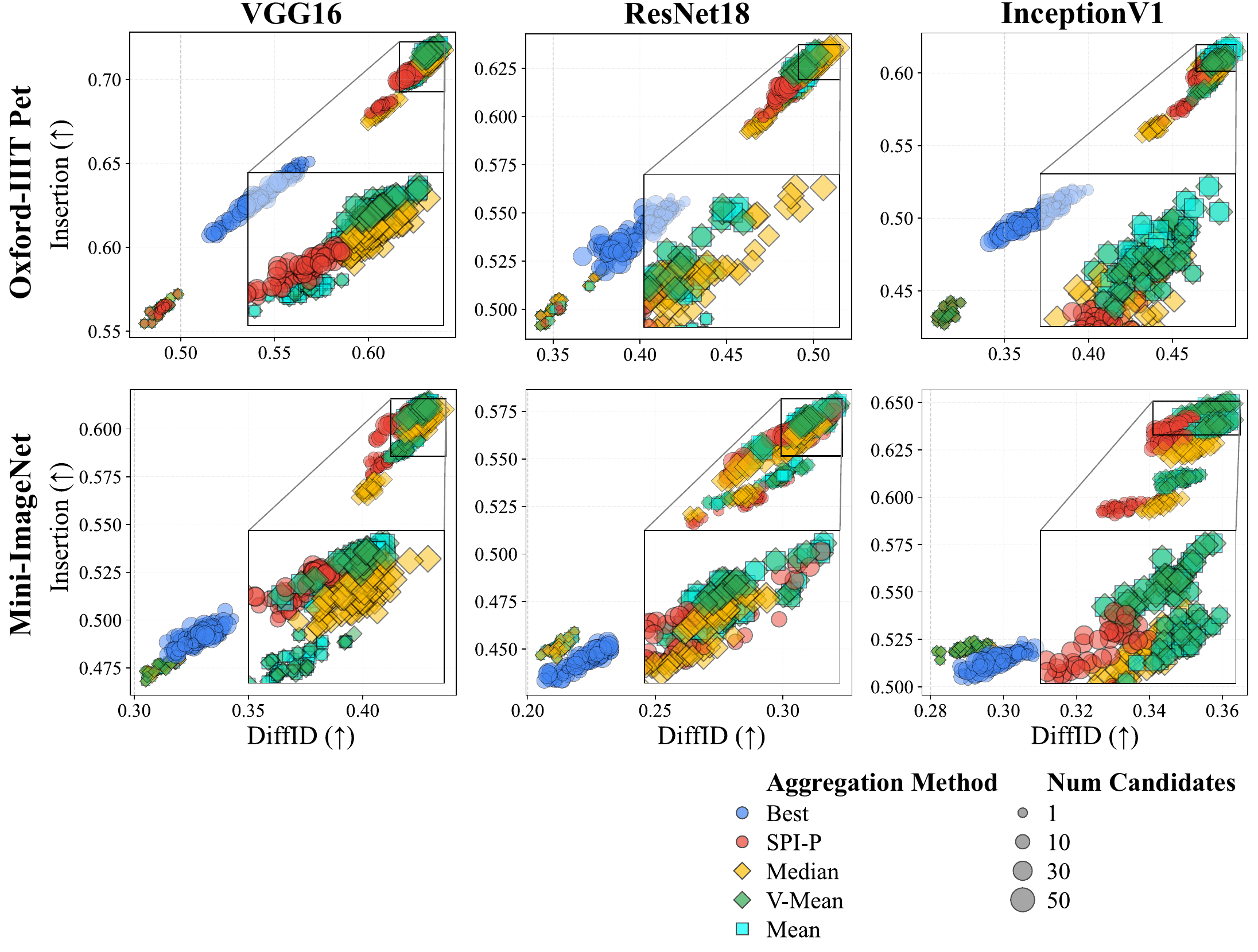}
    \caption{\textbf{Hyperparameter analysis for aggregation method and number of path candidates using DiffID ($\uparrow$) and Insertion ($\uparrow$) metrics.} The {\color{RoyalBlue}\textsf{Best}} method (blue) is identified as sub-optimal. The inset provides a magnified view of the high-performing cluster (formed by the other four methods) to analyze the optimal settings and the impact of number of path candidates.}
    \label{fig:ins_faith_tradeoff}
\end{figure}

\subsection{Ablation Study on Scaling Factors}

The weighting parameters $\lambda_{\text{faith}}$ and $\lambda_{\text{comp}}$ were observed to be sensitive to the specific dataset and model architecture, as visualized in the heatmaps in Figure~\ref{fig:w_cg_heatmap_analysis}. The optimal settings, listed below Table~\ref{tab:rank1_summary}, vary significantly across experiments.

While the optimal settings for $\lambda_{\text{faith}}$ and $\lambda_{\text{comp}}$ vary significantly and the actual performance gap is relatively modest, the heatmaps nonetheless reveal a clear and important general trend: applying guidance is crucial for performance. In all four plots, the bottom-left corner, which corresponds to no guidance ($\lambda_{\text{faith}}=0, \lambda_{\text{comp}}=0$), consistently results in the lowest faithfulness scores (dark purple).

Conversely, the highest-performing regions (bright yellow) are always found where at least one, or both, of the scaling factors has a high value. For example, performance on Oxford-IIIT Pet is maximized with a high $\lambda_{\text{faith}}$ (right side of the plots), while VGG16 on Mini-ImageNet requires a high $\lambda_{\text{comp}}$ (top of the plot). This strongly suggests that while the optimal balance between faithfulness and complexity guidance is model-dependent, applying significant guidance weights is generally beneficial.

\subsection{Optimal Hyperparameter Settings}
Table \ref{tab:rank1_summary} summarizes the parameter settings that achieved the best result for the DiffID metric in each experiment. \edit{The DiffID values reported here are the peak scores identified during this hyperparameter search; the main results in Table~\ref{tab:quant} instead use a single fixed configuration---a black baseline, \textsf{Median} aggregation, and $N{=}30$---across all (dataset, model) pairs for consistency and fair comparison.}
In conclusion, the hyperparameter search shows that increasing the number of candidates ($N$) improves performance, saturating once $N$ is moderately large; that \textsf{Median}, \textsf{Mean}, and \textsf{V-Mean} are all effective aggregation choices; and that $\lambda_{\text{faith}}$ and $\lambda_{\text{comp}}$ are sensitive and require per-setup tuning.

\begin{table}[t!]
    \caption{\textbf{Summary of best hyperparameter configurations} based on the DiffID score. (\textbf{Aggr.}: Aggregation method, \textbf{N}: The Number of Path Candidates)}
    \setlength{\tabcolsep}{8pt}
    \centering
    \begin{tabular}{llclrrc}
    \toprule
    \textbf{Model} & \textbf{Dataset} & \textbf{DiffID} & \textbf{Aggr.} & \textbf{$\lambda_{\text{faith}}$} & \textbf{$\lambda_{\text{comp}}$} & \textbf{N} \\
    \midrule
    \multirow{2}{*}{\textbf{VGG16}} 
    & Oxford-IIIT Pet & 0.6385 & \textbf{\textsf{Median}} & \textbf{1000} & 0 & \textbf{50} \\
    & Mini-ImageNet & 0.4344 & \textbf{\textsf{Median}} & \textbf{1000} & \textbf{-10} & \textbf{50} \\
    \cmidrule{1-7}

    \multirow{2}{*}{\textbf{ResNet18}}
    & Oxford-IIIT Pet & 0.5134 & \textbf{\textsf{Median}} & \textbf{1000} & 100 & \textbf{50} \\
    & Mini-ImageNet & 0.3213 & \textsf{V-Mean} & 0 & 100 & \textbf{50} \\
    \cmidrule{1-7}
    
    \multirow{2}{*}{\textbf{\edit{Inception V1}}}
    & Oxford-IIIT Pet & 0.4861 & \textsf{V-Mean} & 100 & \textbf{-10} & \textbf{50} \\
    & Mini-ImageNet & 0.3629 & \textsf{Mean} & \textbf{1000} & \textbf{-10} & 30 \\
    \bottomrule
    \end{tabular}
    \label{tab:rank1_summary}
\end{table}

\begin{table*}[t!]
\caption{\textbf{Hyperparameters for DiffIG.}}
\label{tab:hyperparams_diffig}
\centering
\begin{tabular}{@{}llc@{}} %
\toprule
\textbf{Component} & \textbf{Hyperparameter} & \textbf{Value} \\ %
\midrule

\multicolumn{3}{l}{\textit{Latent Encoder–Decoder ($\mathcal{E}_{\text{enc}}, \mathcal{D}_{\text{dec}}$)}} \\ %
& Architecture & $\beta$-VAE \\ %
& Latent Dimension & 4096 \\ %

\midrule
\multicolumn{3}{l}{\textit{Unconditional Diffusion Model ($p_\theta(\gamma)$)}} \\ %
& Network Backbone & DiT1D \\ %
& Learning Rate & $2 \times 10^{-4}$ \\ %
& Weight Decay & $1 \times 10^{-5}$ \\ %
& Batch Size & 64 \\ %
% & Noise Scheduler & Linear \\
& Diffusion Steps ($M$) & 100 \\ %
& Training Steps & 1,000,000 \\ %
% & Loss Function & MSE ($\epsilon$-prediction) \\

\midrule
\multicolumn{3}{l}{\textit{Guidance Network ($\mathcal{J}_{\phi_1}, \mathcal{J}_{\phi_2}$)}} \\ %
& Network Backbone & DiT1D \\ %
& Learning Rate & $2 \times 10^{-4}$ \\ %
& Weight Decay & $1 \times 10^{-5}$ \\ %
& Batch Size & 64 \\ %
& Training Steps & 1,000,000 \\ %
% & Objective & MSE Regression \\

\midrule
\multicolumn{3}{l}{\textit{Sampling and Guidance Parameters}} \\ %
& Solver & DDPM \\ %
& Sampling Steps & 100 \\ %
& Faithfulness Weight ($\lambda_{\text{faith}}$) & $\{0.0, 1.0, 10.0, 100.0, 1000.0\}$ \\ %
& Complexity Weight ($\lambda_{\text{comp}}$) & $\{-100.0, -10.0, -1.0, 0.0, 1.0, 10.0, 100.0\}$ \\ %
& Number of Paths ($N$) & $\{1, 10, 30, 50\}$ \\ %
& Aggregation Operator & $\{\text{Mean}, \text{Median}, \text{V-Mean}, \text{SPI-P}\}$ \\ %

\midrule
\multicolumn{3}{l}{\textit{Synthetic Path Generation (SBP)}} \\ %
& Base Distribution ($H$) & Uniform $(0,1)$ \\ %
& Concentration Parameter ($\alpha$) & $\in [1.0, 20.0]$ \\ %

\bottomrule
\end{tabular}%
\end{table*}

\section{Additional Qualitative Results}\label{appx:additional_qual}

In Figure~\ref{fig:additional_qual1}, \ref{fig:additional_qual2}, \ref{fig:additional_qual3}, \ref{fig:additional_qual4}, \ref{fig:additional_qual5} and Figure~\ref{fig:additional_qual6}, we present additional qualitative attribution results. These figures provide further comparisons of our DiffIG variants (\textsf{N=1}, \textsf{best}, and \textsf{aggr}) against the baseline path-based methods on validation samples from the Mini-ImageNet and Oxford-IIIT Pet datasets.

\begin{figure*}
    \centering
    \includegraphics[width=0.95\linewidth]{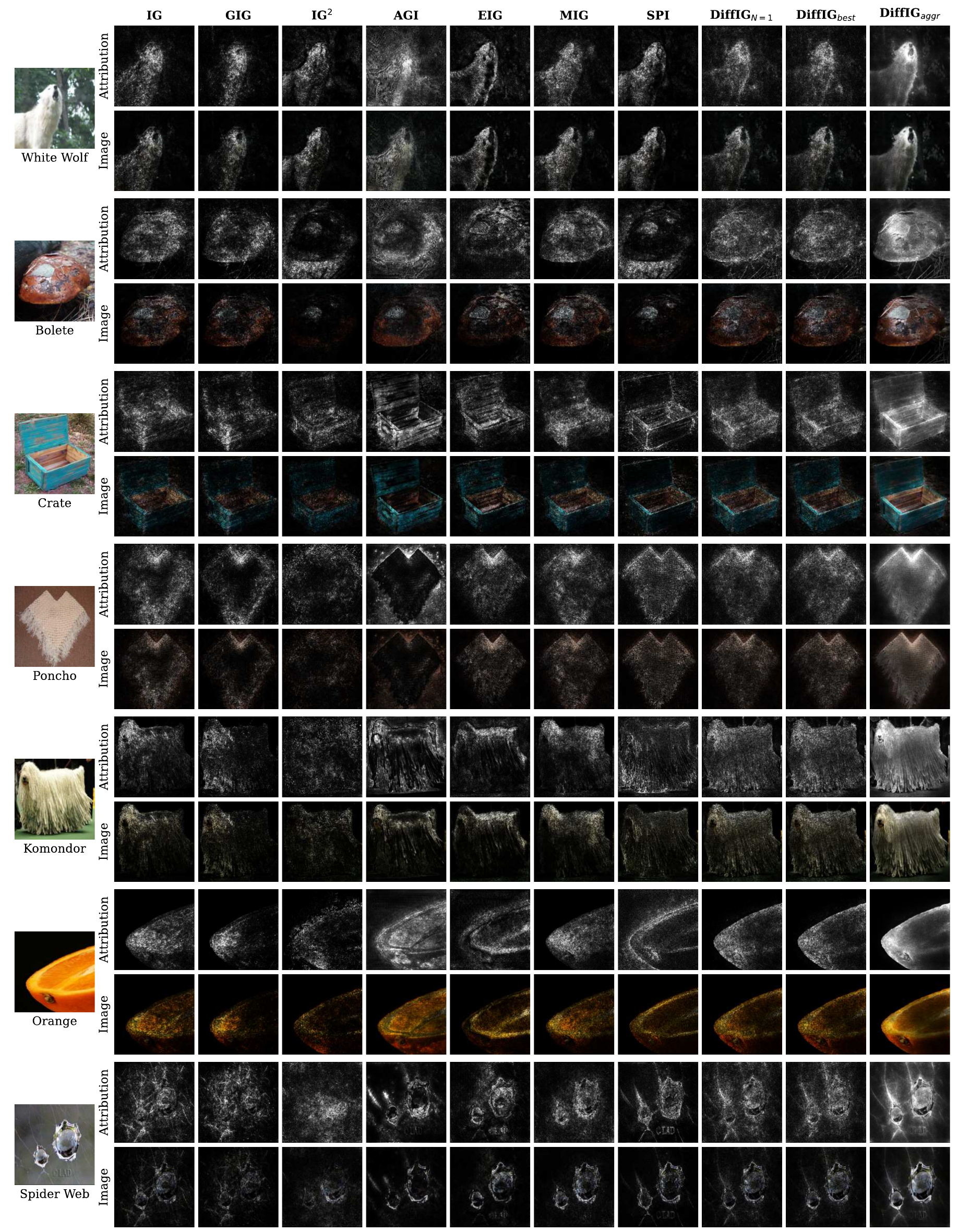}
    \caption{\textbf{Attribution examples of the VGG16 model on the Mini-ImageNet validation set.} Three DiffIG variants are compared: a single path candidate (\textsf{\textit{N}=1}), the \edit{Best-of-$N$} selection strategy (\textsf{best}), and the median aggregation method (\textsf{aggr}). The \textit{Attribution} rows show the attribution maps (min-max normalized after 99th-percentile clipping). The \textit{Image} rows show them overlaid on the input images.}
    \label{fig:additional_qual1}
\end{figure*}

\begin{figure*}
    \centering
    \includegraphics[width=0.95\linewidth]{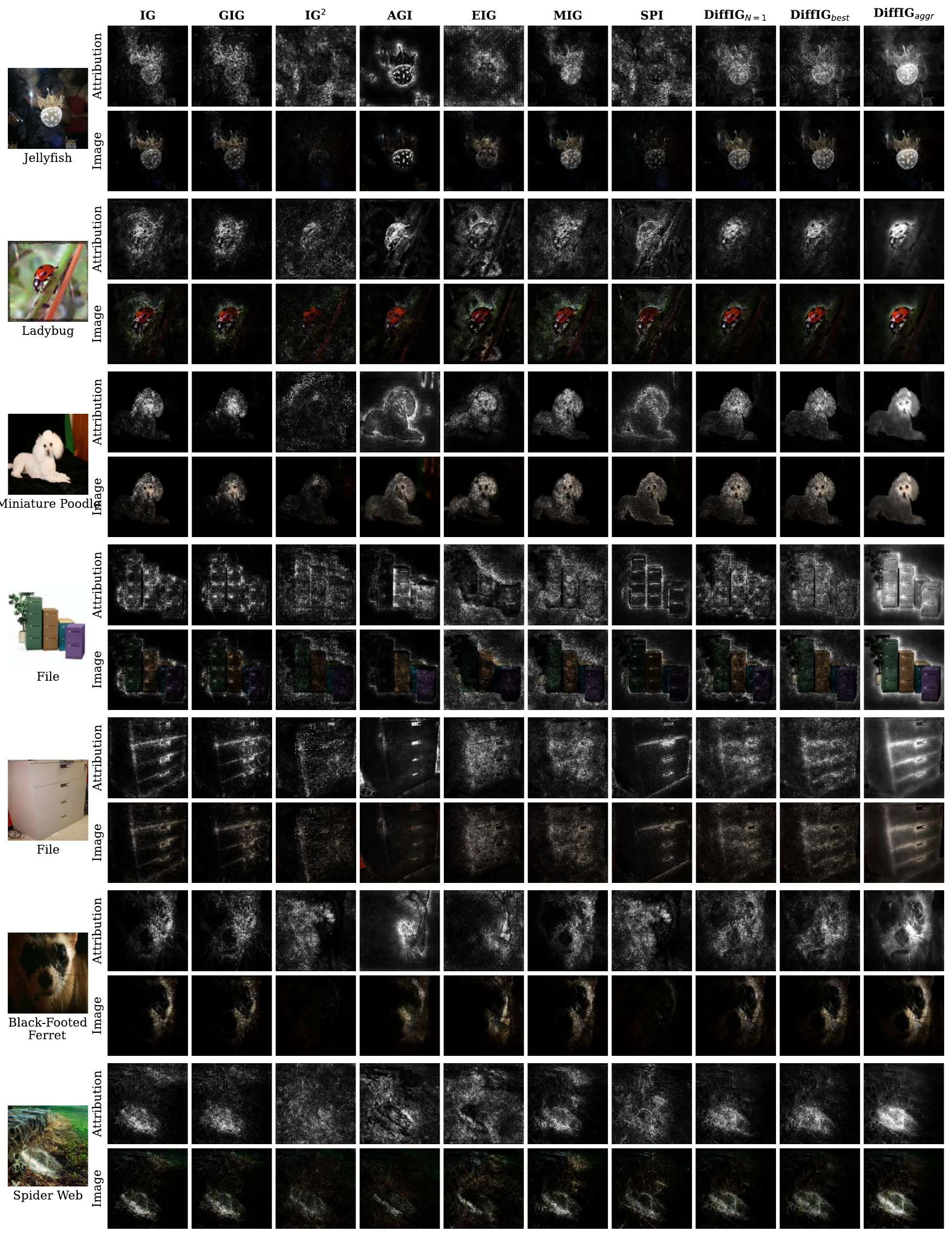}
    \caption{\textbf{Attribution examples of the VGG16 model on the Mini-ImageNet validation set.} Three DiffIG variants are compared: a single path candidate (\textsf{\textit{N}=1}), the \edit{Best-of-$N$} selection strategy (\textsf{best}), and the median aggregation method (\textsf{aggr}). The \textit{Attribution} rows show the attribution maps (min-max normalized after 99th-percentile clipping). The \textit{Image} rows show them overlaid on the input images.}
    \label{fig:additional_qual2}
\end{figure*}

\begin{figure*}
    \centering
    \includegraphics[width=0.95\linewidth]{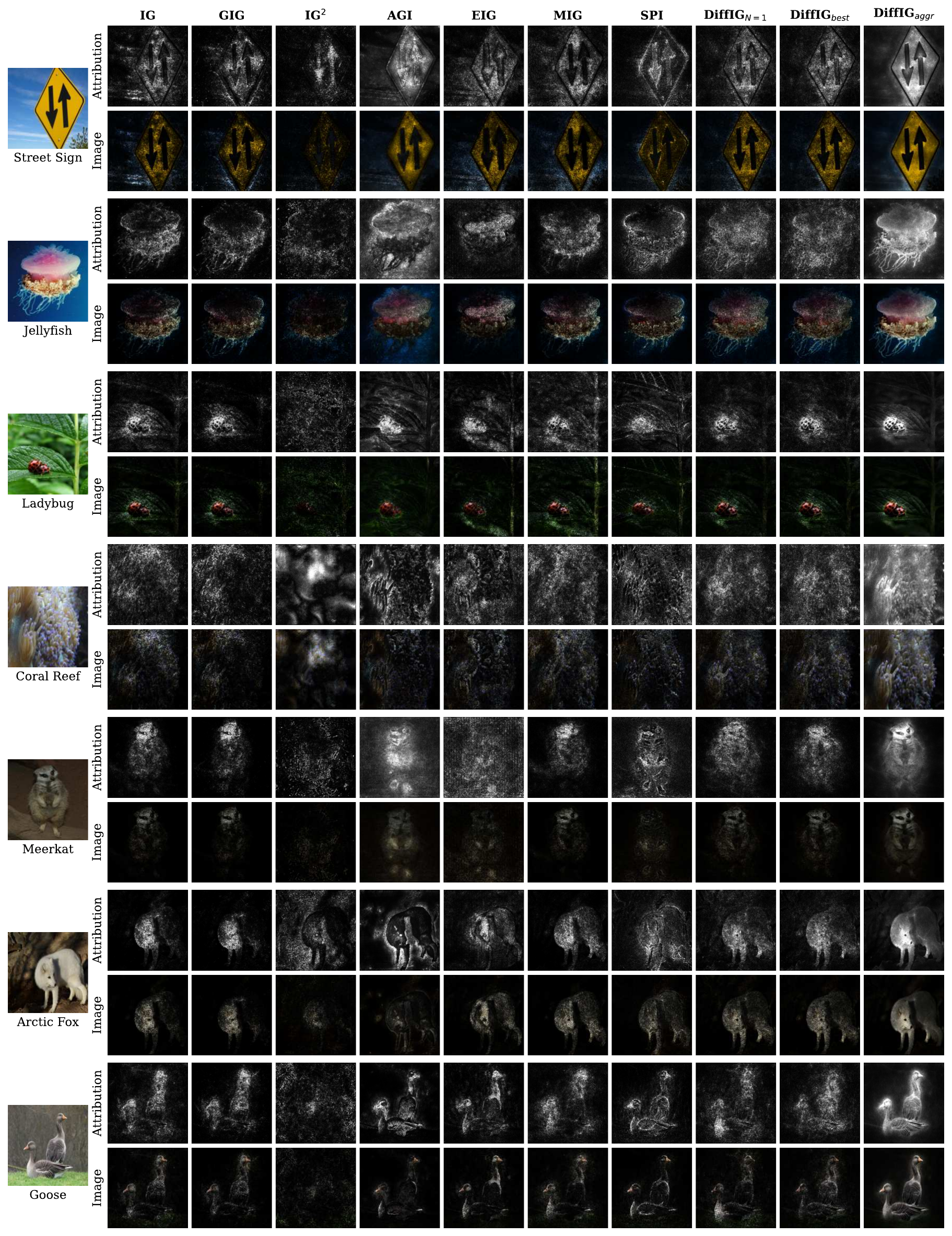}
    \caption{\textbf{Attribution examples of the VGG16 model on the \edit{Mini-ImageNet} \edit{validation} set.} Three DiffIG variants are compared: a single path candidate (\textsf{\textit{N}=1}), the \edit{Best-of-$N$} selection strategy (\textsf{best}), and the median aggregation method (\textsf{aggr}). The \textit{Attribution} rows show the attribution maps (min-max normalized after 99th-percentile clipping). The \textit{Image} rows show them overlaid on the input images.}
    \label{fig:additional_qual3}
\end{figure*}

\begin{figure*}
    \centering
    \includegraphics[width=0.95\linewidth]{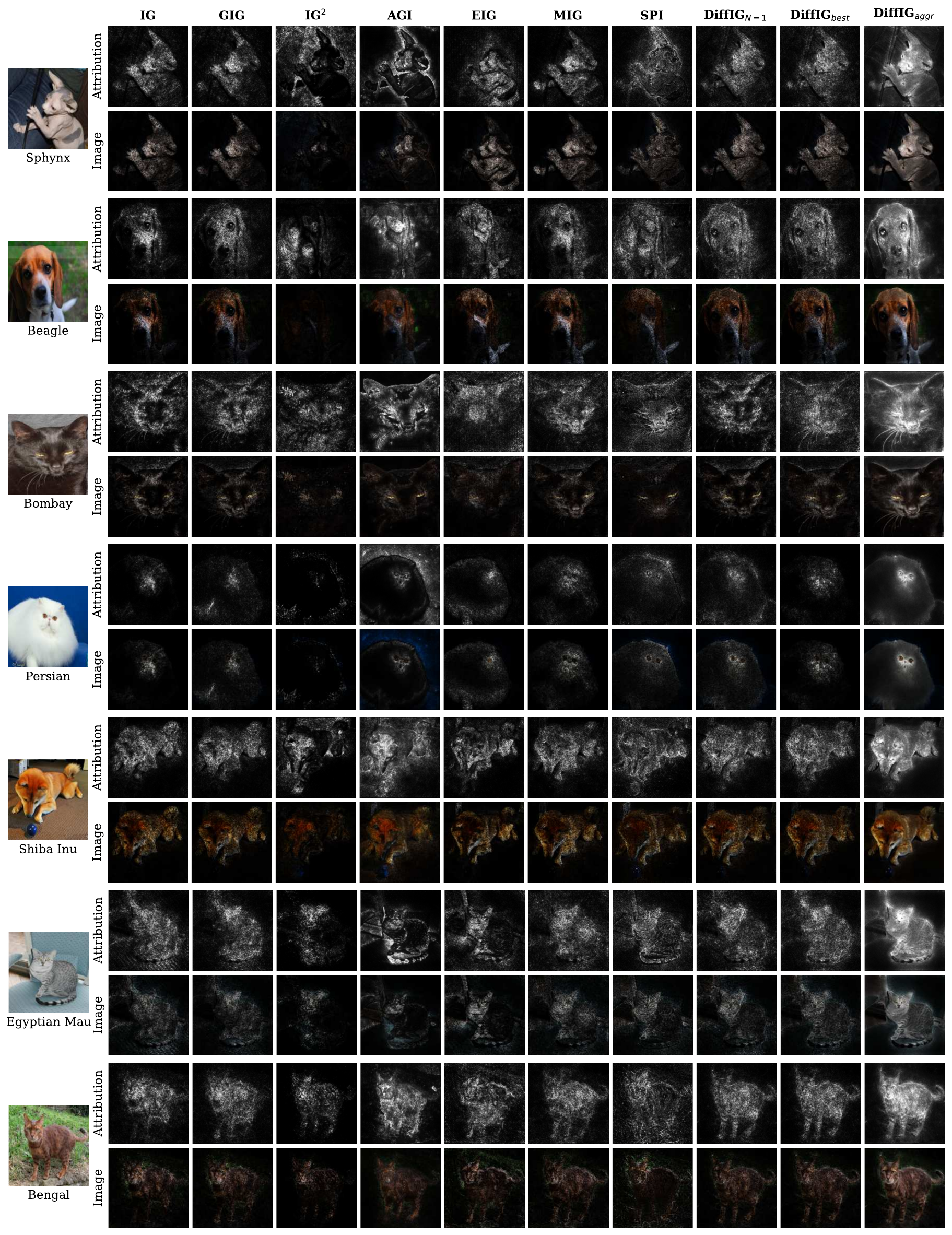}
    \caption{\textbf{Attribution examples of the VGG16 model on the Oxford-IIIT Pet \edit{validation} set.} Three DiffIG variants are compared: a single path candidate (\textsf{\textit{N}=1}), the \edit{Best-of-$N$} selection strategy (\textsf{best}), and the median aggregation method (\textsf{aggr}). The \textit{Attribution} rows show the attribution maps (min-max normalized after 99th-percentile clipping). The \textit{Image} rows show them overlaid on the input images.}
    \label{fig:additional_qual4}
\end{figure*}

\begin{figure*}
    \centering
    \includegraphics[width=0.95\linewidth]{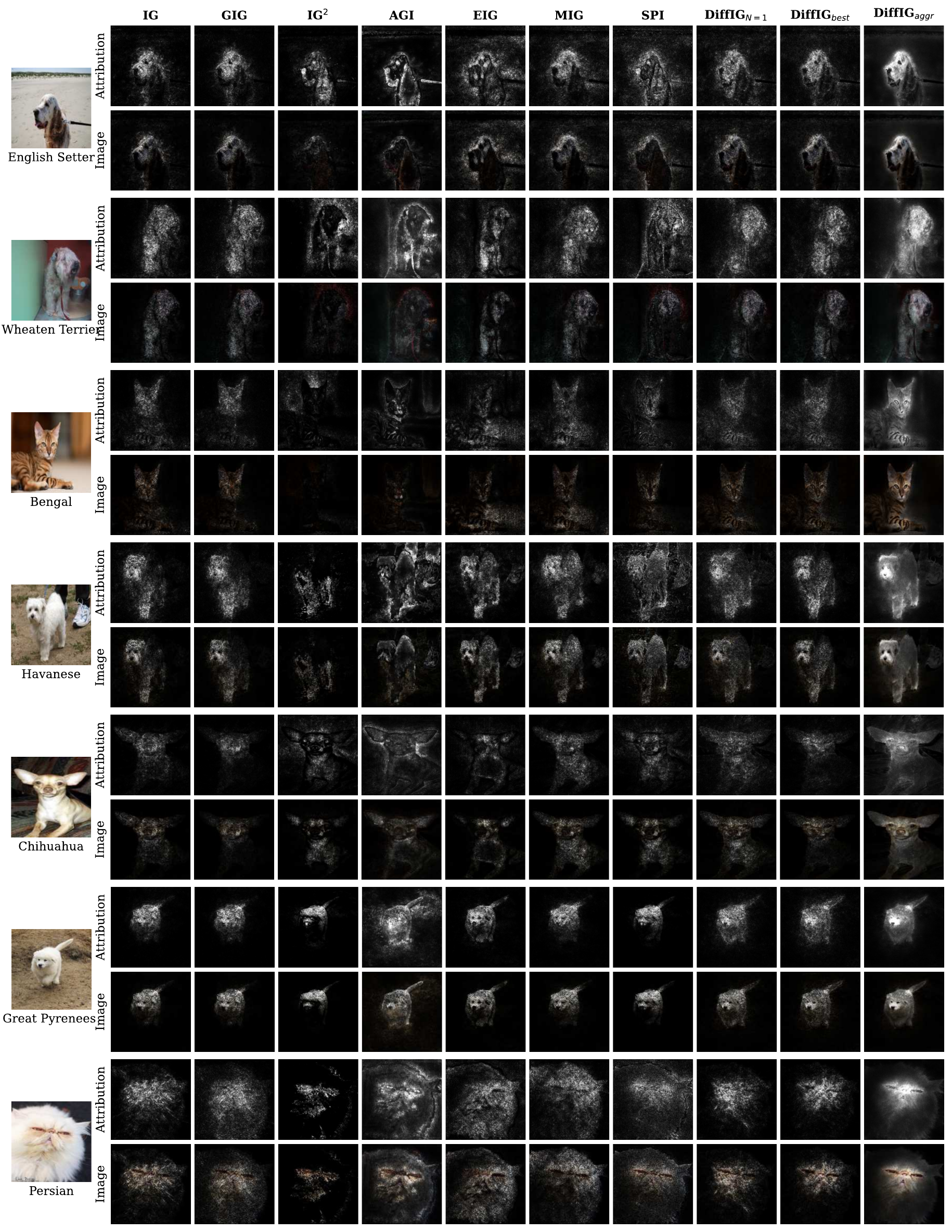}
    \caption{\textbf{Attribution examples of the VGG16 model on the Oxford-IIIT Pet \edit{validation} set.} Three DiffIG variants are compared: a single path candidate (\textsf{\textit{N}=1}), the \edit{Best-of-$N$} selection strategy (\textsf{best}), and the median aggregation method (\textsf{aggr}). The \textit{Attribution} rows show the attribution maps (min-max normalized after 99th-percentile clipping). The \textit{Image} rows show them overlaid on the input images.}
    \label{fig:additional_qual5}
\end{figure*}

\begin{figure*}
    \centering
    \includegraphics[width=0.95\linewidth]{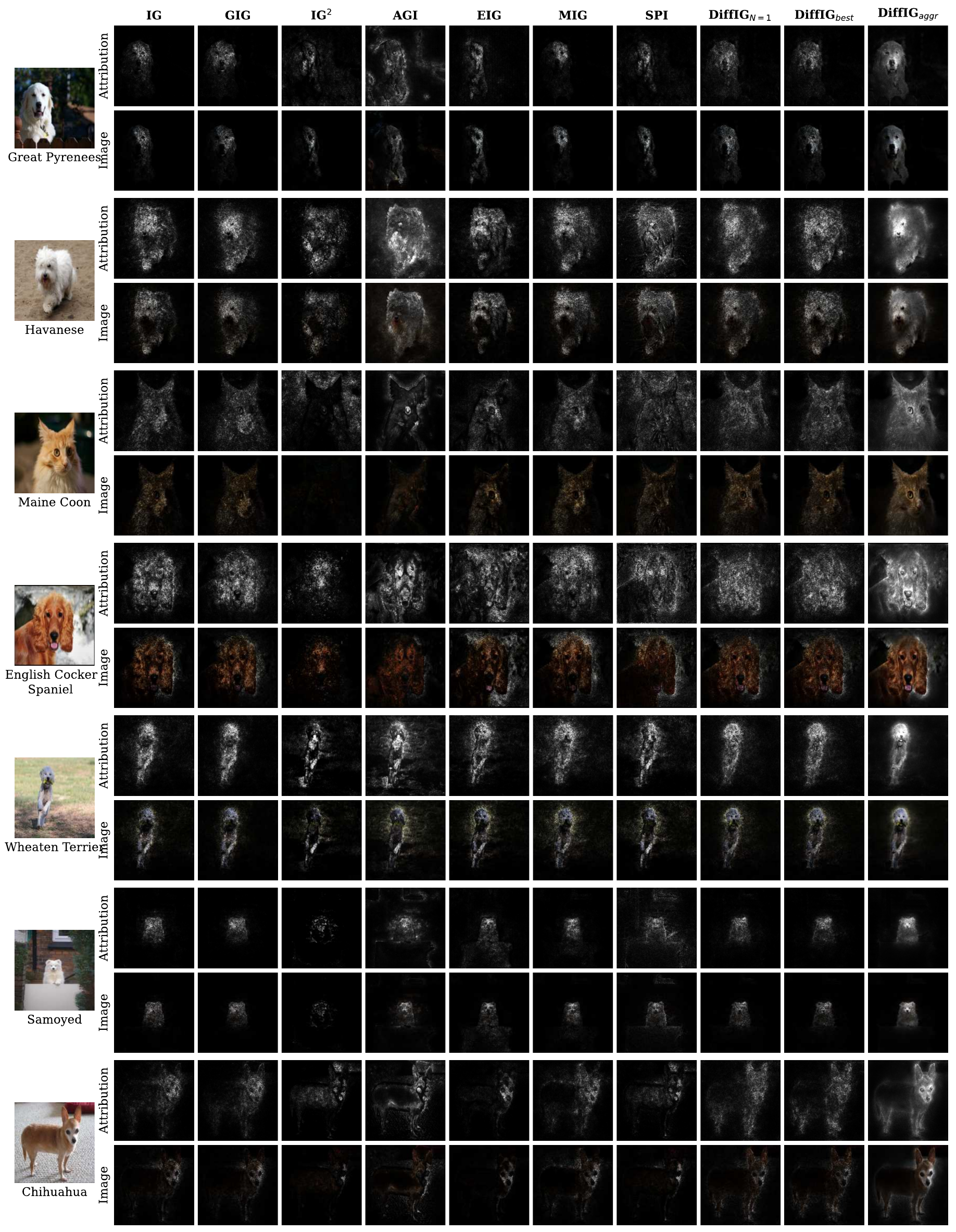}
    \caption{\textbf{Attribution examples of the VGG16 model on the Oxford-IIIT Pet \edit{validation} set.} Three DiffIG variants are compared: a single path candidate (\textsf{\textit{N}=1}), the \edit{Best-of-$N$} selection strategy (\textsf{best}), and the median aggregation method (\textsf{aggr}). The \textit{Attribution} rows show the attribution maps (min-max normalized after 99th-percentile clipping). The \textit{Image} rows show them overlaid on the input images.}
    \label{fig:additional_qual6}
\end{figure*}

\end{document}